\RequirePackage[hyphens]{url}
\documentclass{article}

\pdfoutput=1

\PassOptionsToPackage{numbers}{natbib}
\PassOptionsToPackage{hyphens}{url}

\usepackage{url,hyperref}
\usepackage[numbers]{natbib}

\usepackage{amsfonts, amsmath, amsthm}
\usepackage{nicefrac}
\usepackage{mathtools}
\usepackage{enumitem}

\usepackage{setspace}
\usepackage[margin=1in]{geometry}
\newcommand\cond{\ | \ }
\newcommand{\eg}{e.g.}
\newcommand{\ie}{i.e.}
\newcommand\floor[1]{\left \lfloor #1 \right \rfloor}
\newcommand\ceil[1]{\left \lceil #1 \right \rceil}

\DeclareMathOperator{\argmax}{argmax}

\usepackage{apptools}
\AtAppendix{\counterwithin{theorem}{section}}

\newtheorem{theorem}{Theorem}
\newtheorem{proposition}{Proposition}
\newtheorem{lemma}{Lemma}
\newtheorem{corollary}{Corollary}

\theoremstyle{definition}
\newtheorem{assumption}{Assumption}

\hypersetup{
           breaklinks=true,
           colorlinks=false, 
           pdfusetitle=true,
           hypertexnames=false,
        }
        
\title{On the Convergence of Success Conditioning for Policy Optimization}
\author{Matthew Brun\thanks{Operations Research Center, Massachusetts Institute of Technology, Cambridge, MA 02139. Email: \href{mailto:brunm@mit.edu}{brunm@mit.edu}} \and Xu Andy Sun\thanks{Sloan School of Management, Massachusetts Institute of Technology, Cambridge, MA 02139.  Email: \href{mailto:sunx@mit.edu}{sunx@mit.edu}}}
\date{\vspace{-3em}}

\begin{document}

\maketitle

\begin{abstract}
Success conditioning is a strategy for improving decision-making policies in stochastic environments; it updates a policy by increasing the probability of taking actions that yield successful outcomes.  Success conditioning is common to many reinforcement learning applications, yet its limiting behavior and convergence rates are not well understood.  In this work, we demonstrate that success conditioning converges to an optimal policy on a broad class of Markov decision processes (MDPs).  We also derive convergence rates in some common settings.  For discounted MDPs, we prove convergence within $\mathcal{O}(1/\varepsilon^p)$ iterations to an $\varepsilon$-optimal policy, where the exponent $p$ depends on problem data.  For single-period MDPs, such a policy is obtained within $\mathcal{O}(\log(1/\varepsilon))$ iterations.
\end{abstract}

\section{Introduction}

Success conditioning (SC) is an update rule for policies in stochastic sequential decision-making processes.   SC improves a randomized policy by increasing the probability of taking actions that yield high rewards and decreasing the probability of taking those which are not fruitful.  When outcomes of the process are labeled as successes and failures, the policy update mimics the distribution of actions which yield successful outcomes.

SC and related strategies appear across modern artificial intelligence (AI) contexts, especially in machine learning and reinforcement learning \citep{andrychowicz2017hindsight,ding2019goal}.  \emph{Rejection sampling fine-tuning} refines generative models by sampling outputs, discarding those with low-quality, and continuing to train on the best responses \citep{dong2023raft}. Post-training pipelines for large language models apply this strategy \citep{grattafiori2024llama}, and similar concepts are embedded directly in the training process via reinforcement learning \citep{guo2025deepseek}.  Models for robot control have been trained on successful trajectories from other robots \citep{brohan2022rt} and can autonomously fine-tune on successful outcomes from practice sessions \citep{ghasemipour2025self}. In reinforcement learning, \emph{decision transformers} learn to predict actions as a function of downstream rewards, then identify high-quality actions by conditioning on large returns \citep{chen2021decision}.

The SC update was formalized by \citet{russo2026success} for a Markov decision process (MDP) where rewards are received at a terminal state.  Equivalent updates for dense rewards appear in work on outcome-conditioned behavioral cloning \citep{eysenbach2022imitating} and expectation-maximization algorithms for MDPs \citep{toussaint2011expectation,strupl2022reward}.  SC is closely related to \emph{reward conditioning}, which updates policies by conditioning the action distribution on a reward threshold \citep{peters2007reinforcement,brandfonbrener2022does}.  In previous work, SC has primarily been treated as a heuristic method for policy improvement.  Convergence of the SC update has not been rigorously derived and was recently posed as an open question by \citet{russo2026success}.

In this work, we treat SC as an iterative method for policy optimization and present its convergence properties.  We prove that, for a broad class of MDPs, SC converges to a restricted optimal policy: one that is optimal on the set of supported actions in the initial policy.  For single-period MDPs (\ie, multi-armed bandit problems) and discounted MDPs, we derive upper bounds on the convergence rates to $\varepsilon$-optimal solutions. SC converges in no more than $\mathcal{O}(\log(1/\varepsilon))$ iterations in the single-period case, and in no more than $\mathcal{O}(1/\varepsilon^p)$ iterations in the discounted case, where the exponent $p$ depends on the instance.  These results address several open questions on the limiting behavior and convergence rates of SC. 

\citet{strupl2022reward} analyze reward-weighted regression (RWR), an expectation-maximization algorithm for policy optimization in MDPs.  This analysis demonstrates that RWR and SC have equivalent updates and proves that RWR converges in the limit to an optimal policy, under similar assumptions to those described in our work.  \citet{strupl2022reward} also show that, after sufficiently many iterations, the optimality gap of RWR is bounded by a geometric series, which implies asymptotic linear convergence for discounted MDPs.  However, it remains unknown how many iterations are required before this linear convergence rate applies.  In contrast, we prove explicit iteration complexities for SC, parametrized by the initial policy.  An extensive comparison with \citet{strupl2022reward} is provided in Appendix~\ref{app:RLRcompare}.

\section{Problem Description}

Consider a Markov decision process (MDP), defined by a finite set of states $\mathcal{S}$, a finite set of actions $\mathcal{A}$, transition probabilities $P$, and rewards $R$. The reward $R(s,a)$ associated with state $s \in \mathcal{S}$ and action $a \in \mathcal{A}$ without loss of generality falls on $[0,1]$.  The probability of a transition to state $s' \in \mathcal{S}$ from state $s \in \mathcal{S}$ when action $a \in \mathcal{A}$ is taken is given by $P(s' \cond s,a)$.  Actions are chosen according to a stationary and memoryless policy $\pi$ that depends only on the current state $s$, where $\pi(a \cond s)$ gives the probability of taking action $a \in \mathcal{A}$ when in state $s \in \mathcal{S}$.  Let $\Pi$ denote the set of such policies.

A \emph{trajectory} of the MDP is an infinite sequence of states and actions: $(s_0,a_0,s_1,a_1,\ldots)$, where $s_t \in \mathcal{S}$ and $a_t \in \mathcal{A}$.  Transitions between states are distributed as $s_{t+1} \sim P(\cdot \cond s_t,a_t)$ and actions as $a_t \sim \pi(\cdot \cond s_t)$.  We select an absorbing state $\overline{s}$, with $R(\overline{s},a) = 0$ and $P(\overline{s} \cond \overline{s},a) = 1$ for all $a \in \mathcal{A}$.  This state represents termination of the process; that is, once $\overline{s}$ is entered, no other state is reached and no additional rewards are obtained. The set of non-absorbing states is given by $\mathcal{S}' := \mathcal{S} \setminus \{\overline{s}\}$.

We seek to identify policies $\pi \in \Pi$ that maximize the expected cumulative reward:
\begin{equation}
    \label{eq:optimizePolicy}
    \max_{\pi \in \Pi} \ \mathbb{E}_{\pi} \left [ \sum_{t = 0}^\infty \gamma^t R(s_t,a_t)\right ],
\end{equation}
where $\gamma \in [0,1]$ is the discount factor.  Undiscounted reward functions ($\gamma = 1$) are permitted.
We define a \emph{value function} under policy $\pi$, which gives the expected cumulative reward from a trajectory that starts at state $s$:
\begin{equation}
    V_{\pi}(s) := \mathbb{E}_{\pi} \left [ \sum_{t=0}^\infty \gamma^t R(s_t,a_t) \ \middle| \ s_0 = s \right ] \quad \forall s \in \mathcal{S}.
\end{equation}
We also define \emph{Q-functions}, which give the expected cumulative reward of taking action $a$ at state $s$:
\begin{equation}
    Q_{\pi}(s,a) := \mathbb{E}_{\pi} \left [\sum_{t=0}^\infty \gamma^t R(s_t,a_t) \ \middle| \ s_0 = s,\ a_0 = a \right ] \quad \forall a \in \mathcal{A},\ s \in \mathcal{S}.
\end{equation}

The value functions are a solution to the \emph{Bellman equations}:
\begin{equation}    
    \label{eq:bellmanValue}
    \begin{aligned}
        V_{\pi}(s) & = \sum_{a \in \mathcal{A}} \pi(a \cond s) \left ( R(s,a) + \gamma \sum_{s' \in \mathcal{S}} P(s' \cond s,a) V_\pi(s') \right ) \quad \forall s \in \mathcal{S}',
    \end{aligned}
\end{equation}
where $V_{\pi}(\overline{s}) = 0$.  From this system, we can relate the value and Q-functions as follows:
\begin{equation}
    \label{eq:valueQRelation}
    V_{\pi}(s) = \sum_{a \in \mathcal{A}} \pi(a \cond s) Q_{\pi}(s,a) \quad \text{and} \quad Q_{\pi}(s,a) = R(s,a) + \gamma\sum_{s' \in \mathcal{S}} P(s' \cond s,a) V_{\pi}(s').
\end{equation}
For some instances, the Bellman equations~\eqref{eq:bellmanValue} may have multiple solutions.  To eliminate this and other aberrant behavior, we make the following assumption.  Here, $\pi_0$ denotes an incumbent policy, which will be the initial point for the SC algorithm.

\begin{assumption}
    \label{assumption:model}
    The following properties hold:
    \begin{enumerate}
        \item For any $\pi \in \Pi$, the value functions $V_{\pi}(s)$ are finite and the unique solution to~\eqref{eq:bellmanValue}; and 
        \item $V_{\pi_0}(s) > 0$ for all $s \in \mathcal{S}'$.
    \end{enumerate}
\end{assumption}

Assumption~\ref{assumption:model} ensures that the MDP is reasonably well-behaved.  First, it requires that the value functions under any policy are the unique solution to the Bellman equations.  This assumption eliminates ill-behaved problems, \eg, undiscounted instances that cycle infinitely between states while accruing rewards.  The condition is satisfied for discounted problems (i.e., $\gamma < 1$) or for undiscounted stochastic shortest path problems that satisfy minor assumptions, for example, that the absorbing state $\overline{s}$ can be reached with nonzero probability from any state under any policy \citep[][Sec.~4.2]{bertsekas2019reinforcement}.  Stochastic shortest path and discounted problems are common classes for analysis of infinite-horizon MDPs.  Second, the assumption requires that the value of all states (except $\overline{s}$) must be strictly positive under the initial policy $\pi_0$.  Although this property may not hold in general, states $s' \in \mathcal{S}'$ with $V_{\pi_0}(s') = 0$ can be removed from the MDP and transitions into $s'$ reassigned to the absorbing state $\overline{s}$.  This construction generates an MDP that satisfies the latter assumption but prohibits optimization of the policy at states with zero initial value.  

\subsection{The Success Conditioning Update}

SC iteratively applies the \emph{success conditioning update} to the incumbent policy.  Let $\pi_k$ be the policy in the $k$-th iteration.  For convenience, we index the value and Q-functions by the iteration index: $Q_k(s,a) := Q_{\pi_k}(s,a)$ and $V_k(s) := V_{\pi_k}(s)$.  The policy in the $(k+1)$-th iteration is computed by
\begin{equation}
    \label{eq:successConditioning}
    \pi_{k + 1}(a \cond s) = \frac{\pi_{k}(a \cond s) Q_{k}(s,a)}{V_{k}(s)} \quad \forall a \in \mathcal{A},\ s \in \mathcal{S}'.
\end{equation}
The initial policy $\pi_0$ is given.  This update increases the probability of selecting actions whose expected rewards are above average under the current policy, and decreases the probability of actions with below average expected rewards.
For the update to be well-defined, $V_{k}(s)$ must be positive.  In Lemma~\ref{lemma:monotonic}, we show that the value and Q-functions increase under this update; positivity of $V_k(s)$ follows from Assumption~\ref{assumption:model}.  Lemma~\ref{lemma:monotonic} generalizes Corollary~4.5 of \citet{russo2026success} to the dense reward setting.  The proofs of Lemma~\ref{lemma:monotonic} and all other lemmas are deferred to Appendix~\ref{app:proofs}.
\begin{lemma}
    \label{lemma:monotonic}
    Let Assumption~\ref{assumption:model} hold. Then, $V_{k+1}(s) \geq V_{k}(s)$ and $Q_{k+1}(s,a) \geq Q_{k}(s,a)$ for all $s \in \mathcal{S}'$, $a \in \mathcal{A}$, and $k \in \mathbb{Z}_{\geq 0}$
\end{lemma}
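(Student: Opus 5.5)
The plan is to prove the claim by induction on $k$, with a policy-improvement argument at each step. Suppose $\pi_k$ is well defined and $V_k(s) \geq V_{\pi_0}(s) > 0$ for all $s \in \mathcal{S}'$; the base case is Assumption~\ref{assumption:model}(2). Then the update~\eqref{eq:successConditioning} is well defined. It produces a valid policy, since $Q_k \geq 0$ because $R \in [0,1]$, and $\sum_a \pi_k(a \cond s) Q_k(s,a) = V_k(s)$ by~\eqref{eq:valueQRelation}. The core step is to show the one-step improvement inequality
\begin{equation*}
\sum_{a} \pi_{k+1}(a \cond s) Q_k(s,a) \geq V_k(s) \quad \forall s \in \mathcal{S}',
\end{equation*}
and then convert it into $V_{k+1} \geq V_k$ by a standard policy-improvement argument.

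\textbf{The improvement inequality.} Substituting the update gives
\begin{equation*}
\sum_a \pi_{k+1}(a \cond s) Q_k(s,a) = \frac{\sum_a \pi_k(a \cond s) Q_k(s,a)^2}{V_k(s)}.
\end{equation*}
By Jensen (or Cauchy--Schwarz) with respect to the distribution $\pi_k(\cdot \cond s)$, the numerator satisfies
\begin{equation*}
\sum_a \pi_k(a \cond s) Q_k(s,a)^2 \geq \Bigl(\sum_a \pi_k(a \cond s) Q_k(s,a)\Bigr)^2 = V_k(s)^2 .
\end{equation*}
Dividing by $V_k(s)$ yields the inequality. Actions with $\pi_k(a \cond s)=0$ contribute nothing to either side, so they cause no trouble.

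\textbf{From one-step improvement to $V_{k+1}\ge V_k$.} Define the Bellman operator $T_{k+1}$ of $\pi_{k+1}$ by
\begin{equation*}
(T_{k+1}V)(s) = \sum_a \pi_{k+1}(a\cond s)\Bigl(R(s,a)+\gamma\sum_{s'}P(s'\cond s,a)V(s')\Bigr),
\end{equation*}
with $V(\overline{s})=0$. The improvement inequality says exactly that $T_{k+1}V_k \geq V_k$. Since $T_{k+1}$ is monotone, iterating gives $T_{k+1}^n V_k \geq V_k$ for all $n$. It remains to show $T_{k+1}^n V_k \to V_{k+1}$.

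This convergence is the main obstacle, because with $\gamma = 1$ there is no contraction. I would use Assumption~\ref{assumption:model}(1) instead. Write
\begin{equation*}
T_{k+1}^n V_k = \sum_{t<n}\gamma^t \mathbb{E}_{\pi_{k+1}}[R(s_t,a_t)] + \gamma^n \mathbb{E}_{\pi_{k+1}}[V_k(s_n)].
\end{equation*}
The first term increases to $V_{k+1}$, which is finite by assumption. The second term is a nonnegative remainder.

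To avoid controlling that remainder directly, I would argue as follows. Let $W = \lim_n T_{k+1}^n V_k$. The sequence is monotone nondecreasing (apply $T_{k+1}^n$ to $T_{k+1}V_k \ge V_k$), and I must show it is bounded so that $W$ is finite. Once $W$ is finite, it is a fixed point of $T_{k+1}$, since the sums are finite and limits pass through. By uniqueness of the Bellman solution, $W = V_{k+1}$, and hence $V_{k+1} \geq V_k$.

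For boundedness, I would use that, by Assumption~\ref{assumption:model}(1), finiteness for every policy plus finiteness of $\mathcal{S}$ should give a uniform bound. Failing that, I would bound the remainder by the tail $\gamma^n\mathbb{E}[V_k(s_n)] \le \max_s V_k(s)\cdot \gamma^n\Pr(s_n\ne\overline{s})$. I would then argue that under uniqueness the chain is absorbed or discounted away, so this tail vanishes. This step is where I expect to spend care.

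\textbf{Q-functions.} Once $V_{k+1}\ge V_k$ is established, the statement for $Q$ follows immediately from~\eqref{eq:valueQRelation}:
\begin{equation*}
Q_{k+1}(s,a)-Q_k(s,a)=\gamma\sum_{s'}P(s'\cond s,a)\bigl(V_{k+1}(s')-V_k(s')\bigr)\ge 0,
\end{equation*}
using that $V(\overline{s})=0$ for both policies. This also gives $V_{k+1}\ge V_k\ge V_{\pi_0}>0$ on $\mathcal{S}'$, which closes the induction and keeps every update well defined.
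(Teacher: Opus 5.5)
Your proposal follows the paper's proof: Jensen gives $T_{k+1}V_k \succeq V_k$, monotonicity makes $\{T_{k+1}^n V_k\}_n$ nondecreasing, a bounded nondecreasing sequence converges to a fixed point of $T_{k+1}$, uniqueness under Assumption~\ref{assumption:model} identifies the limit as $V_{k+1}$, and the $Q$-inequality follows from~\eqref{eq:valueQRelation}. As written, the proposal leaves boundedness unresolved: you offer two sketches and do not carry out either. It is not a real obstacle, though.

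You do not need the tail to vanish. Proving that would take an extra argument, e.g.\ that uniqueness forces the substochastic matrix $\gamma P_{\pi_{k+1}}$ on $\mathcal{S}'$ to have spectral radius below one. Your own decomposition already gives the bound:
\begin{itemize}
\item Since $R \geq 0$, the partial sums $\sum_{t<n}\gamma^t\mathbb{E}_{\pi_{k+1}}[R(s_t,a_t)]$ are at most $V_{k+1}(s)$, which Assumption~\ref{assumption:model} makes finite.
\item Since $V_k \geq 0$ and $\gamma^n \Pr(s_n \neq \overline{s}) \leq 1$, the remainder is at most $\max_{s'} V_k(s')$, which is finite because $\mathcal{S}$ is finite.
\end{itemize}
Hence $V_k(s) \leq (T_{k+1}^n V_k)(s) \leq V_{k+1}(s) + \max_{s'} V_k(s')$ for every $n$, and your fixed-point-plus-uniqueness step finishes the proof.

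The paper closes this step differently. It takes a finite solution $\overline{V}$ of the unrestricted Bellman optimality equation and notes $B_{k+1}\overline{V} \preceq \overline{V}$ and $\overline{V} \succeq V_k$, so $B_{k+1}^{(j)}V_k \preceq B_{k+1}^{(j)}\overline{V} \preceq \overline{V}$. That route relies on the existence of such a $\overline{V}$ and its dominance over $V_k$, which the paper treats as a consequence of Assumption~\ref{assumption:model} without proof. Your completed bound uses only the finiteness of $V_k$ and $V_{k+1}$, which the assumption states directly, so it is slightly more self-contained.
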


Under the update~\eqref{eq:successConditioning}, the policy $\pi_k$ and value function $V_k$ at iteration $k$ can be written in closed form:
\begin{equation}
    \label{eq:SCClosedForm}
    \pi_k(a \cond s) = \frac{\pi_0(a \cond s) \prod_{j=0}^{k-1} Q_{j}(s,a)}{\prod_{j=0}^{k-1} V_{j}(s)} \quad \text{and} \quad V_{k}(s) = \sum_{a \in \mathcal{A}} \frac{\pi_0(a \cond s) \prod_{j=0}^{k} Q_{j}(s,a)}{\prod_{j=0}^{k-1} V_{j}(s)}.
\end{equation}

\subsection{Motivation by Conditional Probability under Binary Rewards}

The SC update is motivated by a setting with binary rewards that are received at the termination of an undiscounted MDP \citep{russo2026success}.  This setting is a restriction of the one presented above, in which: a) there exists a set of terminal states $\mathcal{T} \subset \mathcal{S}'$ that transition directly into the absorbing state ($P(\overline{s} \cond \tau,a) = 1$ for all $a \in \mathcal{A}$ and $\tau \in \mathcal{T}$); b) rewards are only received in terminal states ($R(s,a) = 0$ for $s \not \in \mathcal{T}$), are independent of the action ($R(\tau) = R(\tau,a)$ for all $\tau \in \mathcal{T}$ and $a \in \mathcal{A}$), and are binary ($R(\tau) \in \{0,1\}$ for $\tau \in \mathcal{T}$); and c) the MDP is undiscounted ($\gamma = 1$).

Let $\tau \in \mathcal{T}$ be a random variable that gives the terminal state of the trajectory.  The SC update is 
\begin{equation}
    \begin{aligned}
        \pi_{k+1}(a \cond s) &  = \frac{\pi_{k}(a \cond s) Q_{k}(s,a)}{V_{k}(s)} = \frac{\mathbb{P}_{\pi_k} \left[a_0 = a \cond s_0 = s \right ] \mathbb{P}_{\pi_k} \left[R(\tau) = 1 \cond a_0 = a,\ s_0 = s \right ]}{\mathbb{P}_{\pi_k} \left[ R(\tau) = 1 \cond s_0 = s\right ]}\\
        & = \mathbb{P}_{\pi_k} \left[ a_0 = a \cond s_0 = s,\ R(\tau) = 1 \right ],
    \end{aligned}
\end{equation}
where the first line replaces the policy, value function, and Q-function with their probabilistic interpretations in the undiscounted setting with binary rewards, and the second line is an application of Bayes' rule.  Thus, the SC update generates a new policy that corresponds to the distribution of actions that yield \emph{successful} trajectories under policy $\pi_k$, where successful trajectories are those that receive a reward of $1$.  In this work, we study the behavior of SC for general MDPs; however, this general setting does not permit interpretation of the next policy iterate as a conditional distribution.

\subsection{Optimal Restricted Value Functions}

For each $s \in \mathcal{S}'$, let $\mathcal{A}_0(s):= \{a \in \mathcal{A} \,:\, \pi_0(a \cond s) > 0\}$ be the set of supported actions under the initial policy.  Clearly, $\pi_k(a \cond s) = 0$ for any $a \not \in \mathcal{A}_0(s)$ by~\eqref{eq:SCClosedForm}.  Let $\Pi' \subset \Pi$ represent the set of policies $\pi$ which satisfy $\pi(a \cond s) = 0$ for all $s \in \mathcal{S}'$ and $a \in \mathcal{A} \setminus \mathcal{A}_0(s)$.  We define $V'_{k}(s) := \max_{a \in \mathcal{A}_0(s)} Q_{k}(s,a)$, which gives the value of the best supported action at state $s$, when policy $\pi_k$ is followed downstream of the current state.  The value functions $V'_k$ are a greedy value iteration step from the incumbent SC value function $V_k$.

Let the \emph{optimal restricted value functions} over the policy set $\Pi'$ be given by
\begin{equation}
    V^*(s) := \max_{\pi \in \Pi'} \ \mathbb{E}_{\pi} \left [ \sum_{t=0}^\infty \gamma^t R(s_t,a_t) \ \middle| \ s_0 = s \right ] \quad \forall s \in \mathcal{S}.
\end{equation}
Under Assumption~\ref{assumption:model}, $V^*$ are the unique solutions to the \emph{restricted} Bellman optimality equations:
\begin{equation}
    \label{eq:bellmanOptimality}
        V^*(s) = \max_{a \in \mathcal{A}_0(s)} \ R(s,a) + \gamma \sum_{s' \in \mathcal{S}} P(s' \cond s,a) V^*(s') \quad \forall s \in \mathcal{S}',
\end{equation}
where $V^*(\overline{s}) = 0$. The optimal restricted Q-functions $Q^*$ are defined analagously.   In this work, we show convergence of the SC value functions $V_{k}$ to the optimal restricted value functions $V^*$.  

\section{Limiting Behavior of Success Conditioning}

In this section, we demonstrate that SC converges in the limit to the optimal restricted value functions~$V^*$.  We begin by analyzing the ratio of the value functions $V_{k}$ to the value iteration step $V'_k$.  Proposition~\ref{proposition:limitMeanConvergence} shows that the average of the ratios converges to one as the iteration count increases.
\begin{proposition}
    \label{proposition:limitMeanConvergence}
    Let Assumption~\ref{assumption:model} hold.  For all $s \in \mathcal{S}'$, 
    \begin{equation*}
        \lim_{k \rightarrow \infty}\ \frac{1}{k+1} \sum_{j=0}^k \frac{V_{j}(s)}{V'_{j}(s)} \ =\ 1.
    \end{equation*}
\end{proposition}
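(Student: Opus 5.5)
Fix $s \in \mathcal{S}'$ and let $a^*_j \in \argmax_{a \in \mathcal{A}_0(s)} Q_j(s,a)$, so that $V'_j(s) = Q_j(s,a^*_j)$. Since $V_j(s)$ is a convex combination of $\{Q_j(s,a)\}_{a\in\mathcal{A}_0(s)}$, we have $V_j(s)/V'_j(s) \le 1$. It therefore suffices to show that $\frac{1}{k+1}\sum_{j=0}^k \log\bigl(V'_j(s)/V_j(s)\bigr) \to 0$. Indeed, for $x \in (0,1]$ we have $1 - x \le -\log x$, so the average of $1 - V_j/V'_j$ is dominated by the average of these logs. The plan is to bound the logarithmic sum by $O(\log k)$ using the closed form~\eqref{eq:SCClosedForm} together with the monotonicity of Lemma~\ref{lemma:monotonic}.

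From~\eqref{eq:SCClosedForm}, since $\pi_{k+1}(a\cond s)\le 1$, every $a \in \mathcal{A}_0(s)$ satisfies
\begin{equation*}
\pi_0(a\cond s)\prod_{j=0}^{k} Q_j(s,a) \le \prod_{j=0}^{k} V_j(s),
\end{equation*}
and hence
\begin{equation*}
\sum_{j=0}^k \log\frac{Q_j(s,a)}{V_j(s)} \le \log\frac{1}{\pi_0(a\cond s)}.
\end{equation*}
This holds for each fixed action. The maximizer $a^*_j$, however, may change with $j$, so I cannot apply the bound directly with $Q_j(s,a^*_j)$. This is the main obstacle.

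To handle it, I would use that by Lemma~\ref{lemma:monotonic} each $Q_j(s,a)$ is nondecreasing in $j$ and bounded above (finite by Assumption~\ref{assumption:model}, e.g.\ by $V^*$-type bounds or $\max_\pi$ values). Consider the upper envelope $V'_j(s)=\max_{a} Q_j(s,a)$. I would partition $\{0,\dots,k\}$ according to which action attains the max. For a fixed action $a$ and indices $j$ in its block, the losses $\log(Q_j(s,a)/V_j(s))$ are nonnegative only on that block. The per-action inequality above involves all $j$, and off-block terms may be negative, so this needs care.

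A cleaner route is to compare against the eventual best action. Let $\bar Q(s,a)=\lim_j Q_j(s,a)$, which exists by monotonicity and boundedness, and let $\bar a$ maximize $\bar Q(s,\cdot)$. Then
\begin{equation*}
\log\frac{V'_j}{V_j} = \log\frac{Q_j(s,\bar a)}{V_j} + \log\frac{V'_j}{Q_j(s,\bar a)}.
\end{equation*}
The partial sums of the first term are bounded by $\log(1/\pi_0(\bar a\cond s))$. The second term tends to $\log\bigl(\max_a \bar Q(s,a)/\bar Q(s,\bar a)\bigr)=0$, since each $Q_j(s,a)\to\bar Q(s,a)$, $\mathcal{A}$ is finite, and $\bar Q(s,\bar a)\ge Q_0(s,\bar a)$. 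The last quantity is positive provided $Q_0(s,\bar a)>0$. If it is zero, I would instead pick $\bar a$ among the maximizers with $\bar Q>0$; such a maximizer exists because $V_0(s)>0$ forces some supported action to have $Q_0>0$, so $\max_a\bar Q(s,a)>0$, and any maximizer has positive limit. Positivity of $Q_j(s,\bar a)$ for all $j$ must still be checked; if $Q_0(s,\bar a)=0$ but a later iterate is positive, I would start the argument from that index onward.

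Combining the two pieces, the Cesàro averages of the second term vanish, and the first term contributes at most $\log(1/\pi_0(\bar a\cond s))/(k+1)\to 0$. Hence the average of $\log(V'_j/V_j)$ tends to $0$. This gives the limit $\liminf \ge 1$ for the average ratio, and together with the upper bound of $1$, the claim follows.
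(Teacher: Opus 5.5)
Your final route is correct and is essentially the paper's proof in logarithmic form: fix $\bar a\in\mathcal{A}_0(s)$ maximizing $\bar Q(s,\cdot)$, bound $\sum_{j=0}^k\log\bigl(Q_j(s,\bar a)/V_j(s)\bigr)\le\log\bigl(1/\pi_0(\bar a\cond s)\bigr)$ via $\pi_{k+1}(\bar a\cond s)\le 1$, and let $V'_j(s)/Q_j(s,\bar a)\to 1$, with $1-x\le-\log x$ and a Ces\`aro average standing in for the paper's AM--GM step and $\overline{k}_\varepsilon$ bookkeeping (the detours about an $O(\log k)$ bound and partitioning by maximizer can be dropped). The positivity point you leave open, which the paper also uses silently, closes directly, whereas your proposed restart would not work because $Q_0(s,\bar a)=0$ would force $\pi_m(\bar a\cond s)=0$ for all $m\ge 1$: since $V_0>0$ on $\mathcal{S}'$, $Q_0(s,a)=0$ forces $R(s,a)=0$ and $\gamma P(s'\cond s,a)=0$ for all $s'\in\mathcal{S}'$, hence $Q_j(s,a)=0$ for every $j$, so any $\bar a$ with $\bar Q(s,\bar a)>0$ already has $Q_j(s,\bar a)\ge Q_0(s,\bar a)>0$.
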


\begin{proof}
    By Lemma~\ref{lemma:monotonic}, $V_{k}(s)$ and $Q_{k}(s,a)$ are increasing sequences in $k$ that are bounded above by $V^*(s)$ and $Q^*(s,a)$.  Therefore, the sequences have limits, which we define for $s \in \mathcal{S}'$ and $a \in \mathcal{A}$ as
    \begin{equation*}
        \overline{V}(s) := \lim_{k \rightarrow \infty} V_{k}(s) \quad \text{and} \quad \overline{Q}(s,a) := \lim_{k \rightarrow \infty} Q_{k}(s,a).
    \end{equation*}
    For any $\varepsilon \in (0,1)$, there is some $\overline{k}_\varepsilon$ such that $(1-\varepsilon) \overline{Q}(s,a) \leq Q_{k}(s,a)$ for all $k \geq \overline{k}_\varepsilon$, $s \in \mathcal{S}'$, and $a \in \mathcal{A}$.  Let $a^*(s) \in \argmax_{a \in \mathcal{A}_0(s)} \overline{Q}(s,a)$ for all $s \in \mathcal{S}'$.  For all $k \in \mathbb{Z}_{\geq 0}$ and $s \in \mathcal{S}'$,
    \begin{equation}
        \label{eq:limitMaxBound}
        V'_k(s) = \max_{a \in \mathcal{A}_0(s)} Q_k(s,a) \leq \max_{a \in \mathcal{A}_0(s)} \overline{Q}(s,a) = \overline{Q}(s,a^*(s)).
    \end{equation}

    Fix some $\varepsilon \in (0,1)$ and let $k \geq \overline{k}_\varepsilon$.  From the closed-form representation of the value function~\eqref{eq:SCClosedForm},
    \begin{equation}
        \label{eq:limitRatioBound}
        \begin{aligned}
            \prod_{j=0}^k \frac{V_{j}(s)}{V'_{j}(s)} & \geq \pi_0(a^*(s) \cond s) \prod_{j=0}^k \frac{Q_{j}(s,a^*(s))}{V'_{j}(s)}\\
            & \geq \pi_0(a^*(s) \cond s) \left (\frac{Q_{0}(s,a^*(s))}{V^*(s)} \right ) ^{\overline{k}_\varepsilon} \prod_{j=\overline{k}_\varepsilon}^k \frac{Q_{j}(s,a^*(s))}{\overline{Q}(s,a^*(s))}\\
            & \geq \pi_0(a^*(s) \cond s) \left (\frac{Q_{0}(s,a^*(s))}{V^*(s)} \right ) ^{\overline{k}_\varepsilon} (1 - \varepsilon)^{k-\overline{k}_\varepsilon+1}.
        \end{aligned}
    \end{equation}
    The first inequality selects a single element from the sum in \eqref{eq:SCClosedForm}, the second uses~\eqref{eq:limitMaxBound} and the monotonicity of $Q_{k}$ (Lemma~\ref{lemma:monotonic}), and the third uses the definition of $\overline{k}_\varepsilon$.  

    As the sequence $V_k(s)$ has a limit, the sequence $V'_k(s)$ also has a limit, which is bounded away from zero as $V'_k(s) \geq V_0(s) > 0$ by Assumption~\ref{assumption:model}.  Therefore, the limit of the ratios $V_k(s) / V'_k(s)$ also exists, which is sufficient to ensure the existence of the limits of the partial arithmetic and geometric means in the following inequalities:
    \begin{equation*}
        \begin{aligned}
            1 & \geq \lim_{k \rightarrow \infty} \sum_{j=0}^k \frac{V_{j}(s)}{(k+1) V'_{j}(s)} \geq \lim_{k \rightarrow \infty} \left ( \prod_{j=0}^k \frac{V_{j}(s)}{V'_{j}(s)} \right ) ^{\frac{1}{k+1}}\\
            & \geq \sup_{\varepsilon \in (0,1)} \ \lim_{k \rightarrow \infty} \left (\pi_0(a^*(s) \cond s) \left (\frac{Q_{0}(s,a^*(s))}{V^*(s)} \right ) ^{\overline{k}_\varepsilon} (1 - \varepsilon)^{k-\overline{k}_\varepsilon+1} \right )^{\frac{1}{k+1}} = \sup_{\varepsilon \in  (0,1)} \ (1 - \varepsilon) = 1.
        \end{aligned}
    \end{equation*}
    Here, the first relation is by definition of $V'_j(s)$, the second is from the AM-GM inequality, the third uses~\eqref{eq:limitRatioBound}, and the fourth is due to the evaluation of the limit for fixed $\varepsilon$.
\end{proof}

From Proposition~\ref{proposition:limitMeanConvergence}, we establish convergence of $V_{k}$ to the optimal restricted value functions $V^*$.  Theorem~\ref{theorem:limitConvergence} presents this result.

\begin{theorem}
    \label{theorem:limitConvergence}
    Let Assumption~\ref{assumption:model} hold.  For all $s \in \mathcal{S}'$,
    \begin{equation*}
        \lim_{k \rightarrow \infty} \ V_{k}(s) = V^*(s);
    \end{equation*}
    that is, the value functions of the SC policies converge to the restricted optimal value functions.
\end{theorem}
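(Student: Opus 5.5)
The plan is to upgrade Proposition~\ref{proposition:limitMeanConvergence} to convergence of the ratios themselves, show that the limit $\overline{V}$ satisfies the restricted Bellman optimality equations~\eqref{eq:bellmanOptimality}, and then conclude $\overline{V} = V^*$ from uniqueness.

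First, recall from the proof of Proposition~\ref{proposition:limitMeanConvergence} that the limits $\overline{V}(s) = \lim_k V_k(s)$ and $\overline{Q}(s,a) = \lim_k Q_k(s,a)$ exist, and that $\overline{V}(s)>0$. The maximum over the finite set $\mathcal{A}_0(s)$ is continuous, so $V'_k(s) \to \max_{a\in\mathcal{A}_0(s)} \overline{Q}(s,a)$. Consequently the ratio $V_k(s)/V'_k(s)$ converges to some limit $r(s)$. A convergent sequence has Ces\`aro means tending to the same limit, so Proposition~\ref{proposition:limitMeanConvergence} gives $r(s)=1$. Hence
\[
\overline{V}(s) = \max_{a\in\mathcal{A}_0(s)} \overline{Q}(s,a) \quad \text{for all } s\in\mathcal{S}'.
\]

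Second, pass to the limit in the relation~\eqref{eq:valueQRelation}, namely $Q_k(s,a) = R(s,a) + \gamma\sum_{s'} P(s'\cond s,a)V_k(s')$. The sum is finite, so the limit can be taken termwise, giving $\overline{Q}(s,a) = R(s,a) + \gamma \sum_{s'} P(s'\cond s,a)\overline{V}(s')$, with $\overline{V}(\overline{s})=0$. Substituting into the display above shows that $\overline{V}$ solves~\eqref{eq:bellmanOptimality}.

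The main obstacle is the last step: concluding $\overline{V}=V^*$. The paper asserts that $V^*$ is the unique solution of~\eqref{eq:bellmanOptimality} under Assumption~\ref{assumption:model}, which would finish the proof immediately. To be safe I would argue directly instead, so that the step does not depend on uniqueness for the optimality equation in the undiscounted case.

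The bound $\overline{V}\le V^*$ is immediate, since $V_k \le V^*$ for every $k$. For the reverse inequality, define a deterministic policy $\overline{\pi}\in\Pi'$ by choosing, at each $s$, an action $a^*(s)\in\argmax_{a\in\mathcal{A}_0(s)}\overline{Q}(s,a)$. Then $\overline{V}$ satisfies the policy Bellman equation~\eqref{eq:bellmanValue} for $\overline{\pi}$. By part~1 of Assumption~\ref{assumption:model}, that equation has a unique solution, so $\overline{V}=V_{\overline{\pi}}$. This shows that $\overline{V}$ is the value of a greedy policy with respect to itself.

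To get $\overline{V}\ge V^*$, take an optimal policy $\pi^*\in\Pi'$. Since $\overline{V}$ solves~\eqref{eq:bellmanOptimality}, we have $\overline{V}\ge T_{\pi^*}\overline{V}$, where $T_{\pi^*}$ is the Bellman operator of $\pi^*$. Iterating this monotone inequality $n$ times gives $\overline{V}\ge T_{\pi^*}^n\overline{V}$. Because $\overline{V}\ge 0$, we also have $T_{\pi^*}^n\overline{V} \ge T_{\pi^*}^n 0$, and $T_{\pi^*}^n 0$ is the $n$-step truncated return of $\pi^*$. As $n\to\infty$ this truncated return converges to $V_{\pi^*}=V^*$, because rewards are nonnegative and $V_{\pi^*}$ is finite. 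Therefore $\overline{V}\ge V^*$, and together with the upper bound this proves $\lim_k V_k(s) = V^*(s)$.
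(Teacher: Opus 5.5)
Your proof is correct. Its first two steps are the same as the paper's argument: convergence of $V_k(s)/V'_k(s)$, identifying its limit as $1$ via Proposition~\ref{proposition:limitMeanConvergence} and Ces\`aro means, and passing to the limit in \eqref{eq:valueQRelation}. The difference is only in the last step. The paper finishes in one line by invoking uniqueness of the solution of the restricted optimality equations \eqref{eq:bellmanOptimality}, which it asserts without proof to hold under Assumption~\ref{assumption:model}. You instead sandwich $\overline{V}$. The upper bound $\overline{V}\le V^*$ comes from $V_k\le V^*$. The lower bound comes from $\overline{V}\ge T_{\pi}\overline{V}$, monotonicity of $T_\pi$, $\overline{V}\ge 0$, and monotone convergence of the truncated returns $T_\pi^n 0$, which uses only nonnegativity of the rewards. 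This buys a self-contained argument at the cost of a few extra lines; the paper's route is shorter because it leans on a standard fact.

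Two small remarks on your write-up. First, the policy $\overline{\pi}$ and the identity $\overline{V}=V_{\overline{\pi}}$ are never used, since you already have the upper bound. Second, you do not need $\pi^*$ to be optimal. Running your lower-bound argument for an arbitrary $\pi\in\Pi'$ gives $\overline{V}\ge V_\pi$, and taking the maximum over $\Pi'$ statewise then yields $\overline{V}\ge V^*$. This avoids assuming that one stationary policy attains $V^*$ at every state simultaneously.

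For the record, the paper's uniqueness claim is valid, so your caution makes the proof self-contained rather than repairing an error. Uniqueness of every policy-evaluation system forces $I-\gamma P_\pi$ (restricted to $\mathcal{S}'$) to be nonsingular. Perron--Frobenius then gives spectral radius $\rho(\gamma P_\pi)<1$. The usual comparison of two solutions through a greedy policy then yields uniqueness of \eqref{eq:bellmanOptimality}.
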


\begin{proof}
    As noted in the proof of Proposition~\ref{proposition:limitMeanConvergence}, the sequences $V_k(s)$, $V'_k(s)$, and $V_k(s) / V'_k(s)$ all have limits.  We define
    \begin{equation*}
        L := \frac{\lim_{k \rightarrow \infty} V_{k}(s)}{\lim_{k \rightarrow \infty} V'_{k}(s)} = \lim_{k \rightarrow \infty} \frac{V_{k}(s)}{V'_{{k}}(s)} = \lim_{k \rightarrow \infty}\ \frac{1}{k+1} \sum_{j = 0}^k \frac{V_{j}(s)}{V'_{j}(s)}.
    \end{equation*}
    The first equality follows from the existence of both limits and the positivity of the denominator under Assumption~\ref{assumption:model} ($V'_k(s) \geq V_0(s) > 0$), and the second from the equality of the limit of a sequence and the limit of its averages.
    By Proposition~\ref{proposition:limitMeanConvergence}, we have that $L = 1$, so
    \begin{equation*}
        \lim_{k \rightarrow \infty} V_{k}(s) = \lim_{k \rightarrow \infty} V'_{k}(s) = \max_{a \in \mathcal{A}_0(s)}\ R(s,a) + \gamma\sum_{s' \in \mathcal{S}} P(s' \cond s,a) \lim_{k \rightarrow \infty} V_{k}(s).
    \end{equation*}
    Therefore, the values $\lim_{k \rightarrow \infty} V_{k}(s)$ satisfy the Bellman equations~\eqref{eq:bellmanOptimality} and, under Assumption~\ref{assumption:model}, are the unique solution $V^*(s)$.
\end{proof}

Theorem~\ref{theorem:limitConvergence} establishes the convergence behavior of SC: policies generated by SC converge in value to the optimal value functions over actions that are supported by the initial policy $\pi_0$.  In the following sections, we derive convergence rates for MDPs with specific structures.

\section{Convergence Rate for Single-Period MDPs}
\label{sec:singlePeriod}

In this section, we consider an MDP that consists of two states ($\mathcal{S} = \{s^\circ, \overline{s}\}$) and reaches the absorbing state $\overline{s}$ immediately after the first action ($P(\overline{s} \cond s^\circ,a) = 1$ for all $a \in \mathcal{A}$).  Under this structure, $Q_{\pi}(s^\circ,a) = R(s^\circ,a)$.  As the action taken in the first period entirely determines the infinite-horizon reward, we refer to this problem as a \emph{single-period} MDP.  Single-period MDPs are multi-armed bandit problems in which the rewards associated with each arm are known exactly, and are equivalent to maximally-discounted MDPs, \ie, those with $\gamma = 0$.  Although single-period MDPs can be solved in closed-form by selecting an action which maximizes $R(s^\circ,a)$, they provide a simple and instructive setting for analysis of SC convergence rates.  With a similar perspective, \citet{russo2026success} proposes an example of a single-period MDP that demonstrates how the SC update may make slow progress towards an optimal policy. 

For single-period MDPs, we derive a rate of convergence of the value function $V_{k}(s^\circ)$ to the optimal restricted value function $V^*(s^\circ) = \max_{a \in \mathcal{A}_0(s^\circ)} R(s^\circ,a)$.  Theorem~\ref{theorem:MABConvergenceRate} presents the convergence rate.  This rate shows that a policy with an optimality gap of $\varepsilon$ is found within $\mathcal{O}(\log(1/\varepsilon))$ iterations.  However, the coefficient on this rate becomes arbitrarily large as the value of the second-best action approaches that of the best action.  To avoid this issue, we also show that such a policy is found within $\mathcal{O}(1/\varepsilon)$ iterations, with a coefficient that depends only on the initial policy.

\begin{theorem}
    \label{theorem:MABConvergenceRate}
    Let Assumption~\ref{assumption:model} hold.  Let $\mathcal{A}^*_0 := \{a \in \mathcal{A}_0(s^\circ)\,:\, R(s^\circ,a) = V^*(s^\circ)\}$ be the set of optimal actions, let $R_2 := \max_{a \in \mathcal{A}_0(s^\circ) \setminus \mathcal{A}^*_0}\, R(s^\circ,a)$ be the reward of the second-best action, let $D := \sum_{a \in \mathcal{A}^*_0} \pi_0(a \cond s^\circ)$, and let $E := V^*(s^\circ)/R_2$.  For $\varepsilon \in (0,1)$ and $k \in \mathbb{Z}_{\geq 0}$ such that
    \begin{equation}
    \label{eq:IterationConditionMAB}
        k \geq \min \left \{\frac{1}{\log (E)} \log \left(\frac{1-D}{D \varepsilon} \right ),\  \frac{ \log\left (1/D \right )}{\varepsilon} \right \},
    \end{equation}
    it holds that $V^*(s^\circ) - V_k(s^\circ) \leq \varepsilon$.
\end{theorem}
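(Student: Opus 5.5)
In the single-period setting $Q_k(s^\circ,a)=R(s^\circ,a)=:r_a$ does not depend on $k$, so the closed form~\eqref{eq:SCClosedForm} collapses to $\pi_k(a\cond s^\circ)\propto \pi_0(a\cond s^\circ)\,r_a^k$. We therefore have
\begin{equation*}
V_k(s^\circ)=\frac{\sum_a \pi_0(a\cond s^\circ)\, r_a^{k+1}}{\sum_a \pi_0(a\cond s^\circ)\, r_a^{k}} .
\end{equation*}
Write $V^*=V^*(s^\circ)$ and let $p_k:=\sum_{a\in\mathcal{A}_0^*}\pi_k(a\cond s^\circ)$ be the mass on optimal actions after $k$ steps. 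Since every reward lies in $[0,V^*]$, we get $V_k\geq p_kV^*$. Hence $V^*-V_k\leq (1-p_k)V^*\leq 1-p_k$. In both regimes we bound $V^*-V_k$ directly.

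\emph{First bound (geometric).} Let $N:=\sum_{a\notin\mathcal{A}_0^*}\pi_0(a\cond s^\circ)\,r_a^k$. Then $N\leq (1-D)R_2^k$, and
\begin{equation*}
1-p_k=\frac{N}{DV^{*k}+N}\leq \frac{(1-D)R_2^k}{D V^{*k}}=\frac{1-D}{D}E^{-k}.
\end{equation*}
This is at most $\varepsilon$ whenever $k\geq \log\!\big(\tfrac{1-D}{D\varepsilon}\big)/\log E$. The case $R_2=0$ (so $E=\infty$, or $\mathcal{A}_0^*=\mathcal{A}_0$) is trivial, since then $p_k=1$ for $k\geq1$ and $V_0=V^*$ if $D=1$.

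\emph{Second bound ($1/\varepsilon$).} Here I would use a telescoping argument rather than the gap. The ratios give
\begin{equation*}
\prod_{j=0}^{k-1}\frac{V_j}{V^*}=\frac{\sum_a\pi_0(a)\,r_a^k}{V^{*k}}\geq D.
\end{equation*}
By Lemma~\ref{lemma:monotonic}, $V_j$ is nondecreasing, so $(V_{k-1}/V^*)^k\geq D$. Thus $V_{k-1}\geq V^*D^{1/k}$. Applying this one index higher gives $V_k\geq V^*D^{1/(k+1)}$. Then
\begin{equation*}
V^*-V_k\leq V^*\big(1-e^{-\log(1/D)/(k+1)}\big)\leq \frac{\log(1/D)}{k+1}\leq\frac{\log(1/D)}{k},
\end{equation*}
using $1-e^{-x}\leq x$ and $V^*\leq 1$. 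This is at most $\varepsilon$ once $k\geq \log(1/D)/\varepsilon$. For $k=0$ the condition forces $D=1$, which gives $V_0=V^*$.

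Since either condition suffices, the minimum in~\eqref{eq:IterationConditionMAB} does too. The main subtlety is the second bound: the geometric-mean argument must be combined with monotonicity from Lemma~\ref{lemma:monotonic}, using the index $k+1$ so that no off-by-one factor appears. The rest is bookkeeping, including handling the edge cases $D=1$ and $R_2=0$.
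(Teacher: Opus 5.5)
Your proposal is correct and takes essentially the same route as the paper. The $\log(1/D)/\varepsilon$ bound matches the paper's argument: telescope the closed form to get $\prod_{j=0}^{k} V_j(s^\circ)/V^*(s^\circ) \geq D$, then use Lemma~\ref{lemma:monotonic} to obtain $(V_k(s^\circ)/V^*(s^\circ))^{k+1}\geq D$. The geometric bound $(1-D)/(DE^k)$ comes from the same split of $V_k(s^\circ)=\sum_a\pi_0(a\cond s^\circ)R(s^\circ,a)^{k+1}/\sum_a\pi_0(a\cond s^\circ)R(s^\circ,a)^k$ into optimal and suboptimal actions, and routing it through the optimal-action mass $p_k$ is only a cosmetic repackaging. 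Your version is marginally tidier in keeping the exponent $1/(k+1)$ (the paper loosens this to $D^{1/k}$) and in flagging the degenerate cases $D=1$ and $R_2=0$, which the paper leaves implicit.
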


\begin{proof}
    Suppose that $k$ satisfies \eqref{eq:IterationConditionMAB}.  We first explore the case where $k \geq \log(1/D)/\varepsilon$.  We have that
    \begin{equation}
        \label{eq:singlePeriodRatioBound}
        \begin{aligned}
            \left (\frac{V_{k}(s^\circ)}{V^*(s^\circ)} \right )^{k+1} & \geq \prod_{j=0}^k \frac{V_{j}(s^\circ)}{V^*(s^\circ)} = \sum_{a \in \mathcal{A}} \pi_0(a \cond s^\circ) \left ( \frac{R(s^\circ,a)}{V^*(s^\circ)} \right )^{k+1}\\
            & \geq \sum_{a \in \mathcal{A}^*_0} \pi_0(a \cond s^\circ) \left ( \frac{R(s^\circ,a)}{V^*(s^\circ)} \right )^{k+1} = \sum_{a \in \mathcal{A}^*_0} \pi_0(a \cond s^\circ) = D,
        \end{aligned}
    \end{equation}
    where the first inequality follows from Lemma~\ref{lemma:monotonic}, the first equality from the closed-form description~\eqref{eq:SCClosedForm} of the value functions, the second inequality by selecting elements from the sum, and the second equality from the definition of $\mathcal{A}^*_0$.  Then, 
    \begin{equation*}
        V^*(s^\circ) - V_k(s^\circ) = V^*(s^\circ) \left ( 1 - \frac{V_k(s^\circ)}{V^*(s^\circ)} \right ) \leq 1 - D^{\frac{1}{k}} \leq 1 - \exp(-\varepsilon) \leq \varepsilon,
    \end{equation*}
    where the first inequality follows by taking the $(k+1)$-th root of~\eqref{eq:singlePeriodRatioBound} and from $R(s^\circ,a) \leq 1$, and the final inequality is due to the relation $1+x \leq \exp(x)$ for all $x\in\mathbb{R}$.
    
    Second, we explore the case where $k \geq \log((1-D)/D\varepsilon)/\log(E)$.  From~\eqref{eq:SCClosedForm}, we have that
    \begin{equation}
        \label{eq:alternateClosedForm}
        V_k(s^\circ) = \frac{\sum_{a \in \mathcal{A}} \pi_0(a \cond s^\circ) R(s^\circ,a)^{k+1}}{\sum_{a \in \mathcal{A}} \pi_0(a \cond s^\circ) R(s^\circ,a)^k}.
    \end{equation}
    Then,
    \begin{equation*}
        \begin{aligned}
            V^*(s^\circ) - V_k(s^\circ) & = \frac{\sum_{a \in \mathcal{A}} \pi_0(a \cond s^\circ) \left (V^*(s^\circ) - R(s^\circ,a) \right ) R(s^\circ,a)^k}{\sum_{a \in \mathcal{A}} \pi_0(a \cond s^\circ) R(s^\circ,a)^k}\\
            & \leq \frac{\sum_{a \in \mathcal{A}_0(s^\circ) \setminus A^*_0} \pi_0(a \cond s^\circ) \left (V^*(s^\circ) - R(s^\circ,a) \right ) R(s^\circ,a)^k}{\sum_{a \in \mathcal{A}^*_0} \pi_0(a \cond s^\circ) V^*(s^\circ)^k} \leq \frac{1-D}{D E^k} \leq \varepsilon,
        \end{aligned}
    \end{equation*}
    where the first relation follows from~\eqref{eq:alternateClosedForm}, the second relation follows from the definitions of $\mathcal{A}_0(s^\circ)$ and $\mathcal{A}^*_0$, and the third relation is due to $R(s^\circ,a) \in [0,1]$ and the definition of $E$.
\end{proof}

\section{Convergence Rate for Discounted MDPs}

In this section, we derive a convergence rate for discounted MDPs, that is, problems with $\gamma < 1$.  Discounting is common for infinite-horizon MDPs, and standard convergence rates for many typical policy optimization algorithms assume that the MDP is discounted \citep{puterman2014markov}.  The analysis in this section requires Assumption~\ref{assumption:restrictedDiscount}, which requires discounting and that the value functions $V^*$ and $V_k$ fall on $(0,1]$.  The latter assumption is for convenience and can be achieved by rescaling the rewards $R(s,a)$.
\begin{assumption}
    \label{assumption:restrictedDiscount}
    The following properties hold:
    \begin{enumerate}
        \item The MDP is discounted with $\gamma \in [0,1)$; and
        \item $V^*(s) \in (0,1]$ for all $s \in \mathcal{S}'$.
    \end{enumerate}
\end{assumption}
We also define
\begin{equation*}
\begin{aligned}
    \underline{\pi} := \min_{\substack{s \in \mathcal{S}'\\a \in \mathcal{A}_0(s)}}\ \pi_0(a \cond s), \quad \Phi := \min_{s \in \mathcal{S}'} \ \max_{a \in \mathcal{A}_0(s)} \ Q_0(s,a),\quad \text{and} \quad a^*_k(s) \in \underset{a \in \mathcal{A}_0(s)}{\argmax} \ \ Q_{k}(s,a).
\end{aligned}
\end{equation*}
Parameter $\underline{\pi}$ gives the smallest weight on a supported action under the initial policy, $\Phi$ is a lower bound across states on the value of the best action under the initial policy, and $a^*_k(s)$ gives the best action at state $s$ under policy $\pi_k$.  By Assumption~\ref{assumption:model} and the definition of $\mathcal{A}_0(s)$, both $\underline{\pi}$ and $\Phi$ are strictly positive.

To derive a convergence rate, we define a sequence of indices $\{j_i\}_{i=0}^\infty \subset \mathbb{Z}_{\geq 0}$ and treat the SC value functions $V_{j_i}$ as iterates of an approximate value iteration (AVI) algorithm, where there is some error $\varepsilon_i$ between the value iteration step $V'_{j_i}$ and the SC value function $V_{j_{i+1}}$ at the next index.  Lemma~\ref{lemma:AVIconvergence} gives a convergence rate for AVI, parametrized by the errors $\varepsilon_i$.  Then, Proposition~\ref{proposition:convergencetoAVI} bounds the errors $\varepsilon_i$ by the values of the index sequence $\{j_i\}_{i=0}^\infty$. 

\begin{lemma}
    \label{lemma:AVIconvergence}
    Let Assumptions~\ref{assumption:model}~and~\ref{assumption:restrictedDiscount} hold, and let $\{j_i\}_{i=0}^\infty \subset \mathbb{Z}_{\geq 0}$ be a non-decreasing set of iteration indices, with $j_0 = 0$ and $j_{i+1} \geq j_i$.  Fix some index $i \in \mathbb{Z}_{\geq 0}$, and, for all indices $\ell \leq i$, define $\varepsilon_\ell > 0$ such that $V'_{j_\ell}(s) - V_{j_{\ell+1}}(s) \leq \varepsilon_\ell$ for all $s \in \mathcal{S}'$.  Then, for all $s \in \mathcal{S}'$,
    \begin{equation*}
        V^*(s) - V_{j_{i+1}}(s) \leq \gamma^{i+1} + \sum_{\ell=0}^i \gamma^{i-\ell} \varepsilon_\ell.
    \end{equation*}
\end{lemma}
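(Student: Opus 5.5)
The plan is to prove the bound by induction on $i$, using the standard contraction argument for approximate value iteration. The key tool is the restricted Bellman optimality operator $T$, defined by $(TV)(s) := \max_{a \in \mathcal{A}_0(s)} R(s,a) + \gamma \sum_{s'} P(s' \cond s,a) V(s')$ with $V(\overline{s}) = 0$. By \eqref{eq:bellmanOptimality}, $V^* = TV^*$. By \eqref{eq:valueQRelation} and the definition of $V'_k$, $V'_{j}(s) = (TV_j)(s)$. I will use two standard properties of $T$: it is monotone, and it satisfies $(TV)(s) - (TW)(s) \leq \gamma \max_{s'} (V(s') - W(s'))$ whenever $V \geq W$. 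Both follow because a max of affine maps with nonnegative weights $P(\cdot \cond s,a)$ summing to at most one is order preserving and $\gamma$-Lipschitz in the sup norm.

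The core recursion comes from writing
\begin{equation*}
V^*(s) - V_{j_{\ell+1}}(s) = \bigl(V^*(s) - V'_{j_\ell}(s)\bigr) + \bigl(V'_{j_\ell}(s) - V_{j_{\ell+1}}(s)\bigr).
\end{equation*}
The second term is at most $\varepsilon_\ell$ by hypothesis. The first term equals $(TV^*)(s) - (TV_{j_\ell})(s)$. Since $V_{j_\ell} \leq V^*$ (the values of any policy in $\Pi'$ are bounded by $V^*$; this was also used in the proof of Proposition~\ref{proposition:limitMeanConvergence}), the first term is at most $\gamma \max_{s'} (V^*(s') - V_{j_\ell}(s'))$. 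Setting $e_\ell := \max_{s \in \mathcal{S}'} (V^*(s) - V_{j_\ell}(s))$, which is nonnegative, gives
\begin{equation*}
e_{\ell+1} \leq \gamma e_\ell + \varepsilon_\ell .
\end{equation*}

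For the base case, $e_0 = \max_s (V^*(s) - V_0(s)) \leq \max_s V^*(s) \leq 1$. This uses Assumption~\ref{assumption:restrictedDiscount}(2) and $V_0 > 0$ from Assumption~\ref{assumption:model}. Unrolling the recursion for $\ell = 0,\dots,i$ then yields $e_{i+1} \leq \gamma^{i+1} e_0 + \sum_{\ell=0}^i \gamma^{i-\ell}\varepsilon_\ell \leq \gamma^{i+1} + \sum_{\ell=0}^i \gamma^{i-\ell}\varepsilon_\ell$, which is the claimed bound.

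The step needing the most care is the contraction inequality for the restricted operator $T$, together with justifying $V_{j_\ell} \leq V^*$ so the difference is one-signed. The first comes from bounding the max-difference by the difference at the maximizing action for $V^*$, which leaves $\gamma \sum_{s'} P(s'\cond s,a)(V^*(s') - V_{j_\ell}(s'))$, at most $\gamma e_\ell$ since the absorbing state contributes zero. The second holds because $\pi_{j_\ell} \in \Pi'$ and $V^*$ is the maximum over $\Pi'$. Monotonicity of $j_i$ is not actually needed beyond well-definedness.
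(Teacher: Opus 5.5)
Your proposal is correct and follows essentially the same argument as the paper. The paper also splits $V^* - V_{j_{\ell+1}}$ through $V'_{j_\ell}$, bounds the difference of restricted maxima by $\gamma \max_{s'}(V^*(s') - V_{j_\ell}(s'))$, unrolls $e_{\ell+1} \le \gamma e_\ell + \varepsilon_\ell$, and uses $V^* - V_0 \le 1$. Your explicit remark that $e_\ell \geq 0$ (via $V_{j_\ell} \le V^*$), which keeps the zero contribution of the absorbing state harmless, is handled implicitly in the paper by taking the maximum over all of $\mathcal{S}$.
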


\begin{proposition}
    \label{proposition:convergencetoAVI}
    Let Assumptions~\ref{assumption:model}~and~\ref{assumption:restrictedDiscount} hold, and let $\{j_i\}_{i=0}^\infty \subset \mathbb{Z}_{\geq 0}$ be a non-decreasing set of iteration indices, with $j_0 = 0$ and $j_{i+1} \geq j_i$.  For all $s \in \mathcal{S}'$ and $i \in \mathbb{Z}_{\geq 0}$,
    \begin{equation*}
        V'_{j_i}(s) - V_{j_{i+1}}(s) \leq 1 - \left (\underline{\pi} \prod_{\ell = 0}^{i-1} \left (\frac{Q_{j_\ell}(s,a^*_{j_\ell}(s))}{Q^*(s,a^*_{j_\ell}(s))} \right )^{(j_{\ell + 1} - j_{\ell})} \right )^{\frac{1}{1 + j_{i+1}}}.
    \end{equation*}
\end{proposition}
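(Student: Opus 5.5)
The plan is to imitate the argument of Proposition~\ref{proposition:limitMeanConvergence}: lower bound a product of consecutive value functions using the closed form~\eqref{eq:SCClosedForm}, then take a root. Fix $s \in \mathcal{S}'$ and $i$, and write $k := j_{i+1}$. The closed form telescopes to
\begin{equation*}
\prod_{j=0}^{k} V_j(s) = \sum_{a \in \mathcal{A}} \pi_0(a \cond s) \prod_{j=0}^{k} Q_j(s,a).
\end{equation*}
By Lemma~\ref{lemma:monotonic}, $V_k(s)^{k+1} \geq \prod_{j=0}^k V_j(s)$. Under Assumption~\ref{assumption:restrictedDiscount} we have $V'_{j_i}(s) \leq V^*(s) \leq 1$. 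Hence it suffices to prove
\begin{equation*}
\frac{V_k(s)}{V'_{j_i}(s)} \geq X := \left(\underline{\pi} \prod_{\ell=0}^{i-1} \left(\frac{Q_{j_\ell}(s,a^*_{j_\ell}(s))}{Q^*(s,a^*_{j_\ell}(s))}\right)^{j_{\ell+1}-j_\ell}\right)^{\frac{1}{1+k}},
\end{equation*}
because then $V'_{j_i}(s) - V_k(s) \leq V'_{j_i}(s)(1-X) \leq 1 - X$.

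To get this ratio bound, I would divide the product identity by $V'_{j_i}(s)^{k+1}$ and keep only one supported action $a$ in the sum. This costs at most a factor $\underline{\pi}$. I then split the index range $\{0,\dots,k\}$ into the blocks $[j_\ell, j_{\ell+1})$ for $\ell < i$ and the tail $[j_i, k]$.

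\emph{Tail.} Choose $a = a^*_{j_i}(s)$. For $j \geq j_i$, monotonicity gives $Q_j(s,a) \geq Q_{j_i}(s,a) = V'_{j_i}(s)$, so every tail factor is at least $1$.

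\emph{Blocks.} For $j \in [j_\ell, j_{\ell+1})$, monotonicity gives $Q_j(s,a) \geq Q_{j_\ell}(s,a)$. The denominator satisfies $V'_{j_i}(s) = Q_{j_i}(s,a) \leq Q^*(s,a)$. So each block contributes at least $\bigl(Q_{j_\ell}(s,a)/Q^*(s,a)\bigr)^{j_{\ell+1}-j_\ell}$.

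Taking the $(k+1)$-th root then gives the claimed form, except that the action appearing in every block is $a^*_{j_i}(s)$ rather than $a^*_{j_\ell}(s)$.

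\emph{Main obstacle.} The hard step is replacing the single fixed action $a^*_{j_i}(s)$ in block $\ell$ by the block-dependent greedy action $a^*_{j_\ell}(s)$. One product over $j$ cannot use a different action in each block.

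My first attempt would avoid selecting one action globally. I would instead use the ratio structure per iteration: for $j$ in block $\ell$, bound $V_j(s)/V'_{j_i}(s)$ from below via $V'_{j_\ell}(s) = Q_{j_\ell}(s,a^*_{j_\ell}(s))$ together with $V'_{j_i}(s) \leq V^*(s) = \max_a Q^*(s,a)$. I would try to show that the ratio $Q_{j_\ell}(s,\cdot)/Q^*(s,\cdot)$, evaluated at the greedy action $a^*_{j_\ell}(s)$, lower bounds the relevant quantity. This would use that $Q^*(s,a^*_{j_\ell}(s)) \leq V^*(s)$ and that $a^*_{j_\ell}(s)$ maximizes $Q_{j_\ell}(s,\cdot)$.

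If that fails, a fallback is to compose the bound segment by segment. Using the closed form from iterate $j_\ell$ to iterate $j_{\ell+1}$, I would choose action $a^*_{j_\ell}(s)$ within each segment. This route requires controlling $\pi_{j_\ell}(a^*_{j_\ell}(s) \cond s)$ from below, and I expect to recover the single $\underline{\pi}$ factor only via the $\pi_0$ weight in the first segment.

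I expect this action-matching step to be where most of the work lies. The remaining steps are routine consequences of Lemma~\ref{lemma:monotonic} and~\eqref{eq:SCClosedForm}.
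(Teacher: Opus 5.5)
Your reduction and your fixed-action bound match the paper's proof. The paper also keeps the single action $a^*_{j_i}(s)$ in~\eqref{eq:SCClosedForm} and uses monotonicity blockwise, and its final display is
\[
V'_{j_i}(s) - V_{j_{i+1}}(s) \leq 1 - \Bigl(\underline{\pi}\prod_{\ell=0}^{i-1}\Bigl(\frac{Q_{j_\ell}(s,a^*_{j_i}(s))}{Q_{j_i}(s,a^*_{j_i}(s))}\Bigr)^{j_{\ell+1}-j_\ell}\Bigr)^{\frac{1}{1+j_{i+1}}}.
\]
This is your bound with the sharper denominator $Q_{j_i}$ in place of $Q^*$. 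The paper never performs the swap $a^*_{j_i}(s)\to a^*_{j_\ell}(s)$ that you flag, so its proof stops exactly where yours does. The step you defer is a genuine gap, and it cannot be filled: with $a^*_{j_\ell}(s)$ and $Q^*$, the stated inequality is false.

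\emph{Counterexample.} Take $\gamma<1$ and $L\geq 2$, with states $s,u_1,\dots,u_L$ and two actions at every state, each with $\pi_0=1/2$, so $\underline{\pi}=1/2$.
At $s$, action $a$ terminates with reward $\rho:=\gamma^L/2$, and action $b$ moves to $u_1$ with reward $0$. At $u_t$ ($t<L$), one action moves to $u_{t+1}$ and the other terminates, both with reward $0$. At $u_L$, the two actions terminate with rewards $1$ and $0$. Assumptions~\ref{assumption:model} and~\ref{assumption:restrictedDiscount} hold.

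The reward-$0$ terminating actions on the chain have $Q_0=0$, so one SC step makes the chain deterministic. Hence $Q_0(s,b)=(\gamma/2)^L$, $Q_k(s,b)=\gamma^L$ for $k\geq 1$, and $Q_k(s,a)=Q^*(s,a)=\rho$. So $a^*_0(s)=a$ with $Q_0(s,a)/Q^*(s,a)=1$, while $V'_k(s)=\gamma^L$ for $k\geq 1$. The closed form gives $\pi_k(b\cond s)/\pi_k(a\cond s)=2^{k-L}$ for $k\geq 1$, so $V_L(s)=3\gamma^L/4$.

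Now take $i=1$ and $j_1=j_2=L$. The statement then claims $\gamma^L/4\leq 1-2^{-1/(L+1)}\leq \log 2/(L+1)$. This fails for $L=10$, $\gamma=0.99$, where the two sides are about $0.226$ and $0.061$.

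\emph{Why your routes cannot work.} The loss arises in iteration $0$: SC shrinks $b$'s weight by the factor $2^{1-L}$. Neither the ratio at $a^*_0(s)$ nor $\underline{\pi}$ registers this.
\begin{itemize}
\item Per-iteration route: $V_j(s)\leq \tfrac{2}{3}V'_L(s)$ for every $j<L$, so no per-iteration lower bound of the required size exists.
\item Segment-wise route: $\pi_1(a^*_1(s)\cond s)=2^{1-L}/(1+2^{1-L})$, which has no lower bound in terms of $\underline{\pi}$.
\end{itemize}

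What is actually provable is the fixed-action bound displayed above. Its factors are not bounded below by $\Phi$, however. In the example with $L=10$, $\gamma=0.99$, the factor $Q_0(s,b)/Q_L(s,b)=2^{-L}$ is below $\Phi=(\gamma/2)^{L-1}$. So the floor $\max\{\Phi,\cdot\}$ that Theorem~\ref{theorem:discountedConvergenceRate} applies to these factors would also need a new argument.
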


\begin{proof}
    Fix some $i \in \mathbb{Z}_{\geq 0}$ and $s \in \mathcal{S}'$.  We have that
    \begin{equation}
        \label{eq:ProductConvergenceToAVI}
        \begin{aligned}
            \left(\frac{V_{j_{i+1}}(s)}{V'_{j_i}(s)}  \right )^{1 + j_{i+1}} & \geq \prod_{j=0}^{j_{i+1}} \frac{V_j(s)}{V'_{j_i}(s)} \geq \pi_0 (a^*_{j_i}(s) \cond s) \prod_{j=0}^{j_{i+1}}\frac{Q_j(s,a^*_{j_i}(s))}{Q_{j_i}(s,a^*_{j_i}(s))}\\
            &\geq \underline{\pi} \left (\frac{Q_{j_i}(s,a^*_{j_i}(s))}{Q_{j_i}(s,a^*_{j_i}(s))} \right )^{(1 + j_{i+1} - j_i)} \prod_{\ell=0}^{i-1} \left(\frac{Q_{j_\ell}(s,a^*_{j_i}(s))}{Q_{j_i}(s,a^*_{j_i}(s))} \right)^{(j_{\ell+1}-j_\ell)},
        \end{aligned}
    \end{equation}
    where the first inequality follows from the monotonicity of the value functions (Lemma~\ref{lemma:monotonic}), the second by selecting a single action in the closed-form representation~\eqref{eq:SCClosedForm} and from the definition of $a^*_j(s)$, and the third from the monotonicity of the Q-functions (Lemma~\ref{lemma:monotonic}). Then,
    \begin{equation*}
        V'_{j_i}(s) - V_{j_{i+1}}(s) = V'_{j_i}(s) \left ( 1 - \frac{V_{j_{i+1}}(s)}{V'_{j_i}(s)} \right ) \leq 1 - \left (\underline{\pi}\prod_{\ell=0}^{i-1} \left(\frac{Q_{j_\ell}(s,a^*_{j_i}(s))}{Q_{j_i}(s,a^*_{j_i}(s))} \right)^{(j_{\ell+1}-j_\ell)} \right )^{\frac{1}{1+j_{i+1}}},
    \end{equation*}
    where the final inequality follows from Assumption~\ref{assumption:restrictedDiscount} and taking the $(1+j_{i+1})$-th root of~\eqref{eq:ProductConvergenceToAVI}.  This relies on the observation that the lower bound in~\eqref{eq:ProductConvergenceToAVI} is no larger than $1$.
\end{proof}

Theorem~\ref{theorem:discountedConvergenceRate} derives a convergence rate for discounted MDPs by combining the previous two results with appropriately selected indices $j_i$ and errors $\varepsilon_i$.  Specifically, we show that a policy with an optimality gap of $\varepsilon$ is obtained within $\mathcal{O}(1/\varepsilon^p)$ iterations, where $p$ is determined by problem data and some tunable parameter $B$.  In Theorem~\ref{theorem:discountedConvergenceRate}, $p>1$ as the denominator in its definition~\eqref{eq:discountedConvergenceParameters} is negative.  

To prove Theorem~\ref{theorem:discountedConvergenceRate}, we leverage the convergence of the SC value functions $V_{j_{i+1}}$ to the AVI iterates $V'_{j_i}$, as described in Proposition~\ref{proposition:convergencetoAVI}.  We first define parameters $B$, $k$, $j_i$, $\varepsilon_i$, and $\Delta_i$. Parameters $j_i$ separate the SC iterations into blocks $\{j_i+1,\dots,j_{i+1}\}$, where each block corresponds to an AVI step.  Parameter $\varepsilon_i$ gives an upper bound on the error between the SC iterate $V_{j_{i+1}}$ and the AVI iterate $V'_{j_i}$, and $\Delta_i$ is an upper bound on the error between the SC iterate $V_{j_i}$ and the optimal value function $V^*$.  Parameter $B$ represents the proportion of the overall error $\Delta_i$ allocated to the AVI error $\varepsilon_i$, that is, $\varepsilon_i = B \Delta_i$.  Finally, $k$ denotes the number of AVI steps required to ensure an overall error of $\Delta_k \leq \varepsilon$. Using Proposition~\ref{proposition:convergencetoAVI}, we show that the SC-to-AVI errors are bounded by $\varepsilon_i$ at each iterate $j_{i}$.  We then use Lemma~\ref{lemma:AVIconvergence} to show that the overall errors are bounded by $\Delta_i$.

Lemma~\ref{lemma:logRatioBound} gives a relevant bound on the ratio of logarithms, which is used in the proof of Theorem~\ref{theorem:discountedConvergenceRate}.  In Lemma~\ref{lemma:logRatioBound} and Theorem~\ref{theorem:discountedConvergenceRate}, we take $\log(1/(1-x))/x = 1$ at $x =0$, so that the expression is continuous on $[0,1)$.

\begin{lemma}
    \label{lemma:logRatioBound}
    Let $a \in (0,1)$, $b \in \mathbb{R}_{>0}$, and $c \in [0,1)$.  Then, 
    \begin{equation*}
        \frac{\log(\max\{1-b,\,1-c\})}{\log(1-a)} \leq \frac{b\log(1/(1-c))}{ac}.
    \end{equation*}
\end{lemma}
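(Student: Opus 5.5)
The plan is to rewrite the left-hand side in terms of a single quantity and then compare two logarithms using the convexity of $x \mapsto \log(1/(1-x))$. Set $m := \min\{b,c\}$, so that $\max\{1-b,\,1-c\} = 1 - m$. Since $c \in [0,1)$, we have $m \in [0,1)$. Hence $1-m > 0$ and the numerator is well defined even when $b \geq 1$. Both logarithms are nonpositive, and $\log(1-a) < 0$ because $a \in (0,1)$. The left-hand side therefore equals
\begin{equation*}
    \frac{\log(1/(1-m))}{\log(1/(1-a))},
\end{equation*}
which is a nonnegative number with positive denominator.

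Define $f(x) := \log(1/(1-x))/x$ on $[0,1)$, with $f(0) = 1$ as in the convention stated before the lemma. The key fact I will use is that $f$ is non-decreasing. This follows from the power series $\log(1/(1-x)) = \sum_{n\geq 1} x^n/n$, which gives $f(x) = \sum_{n \geq 1} x^{n-1}/n$. This series has nonnegative coefficients, so it is non-decreasing on $[0,1)$, and it equals $1$ at $x=0$.

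\textbf{Bounding the denominator.} The power series gives $\log(1/(1-a)) \geq a$; equivalently, $f(a) \geq f(0) = 1$.

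\textbf{Bounding the numerator.} Since $m \leq c$ and $f$ is non-decreasing, and since $m \leq b$ with $f(c) \geq 0$,
\begin{equation*}
    \log(1/(1-m)) = m f(m) \leq m f(c) \leq b f(c) = \frac{b \log(1/(1-c))}{c}.
\end{equation*}
When $m = 0$ the identity $\log(1/(1-m)) = m f(m)$ holds trivially with both sides zero. When $c = 0$ the right-hand side is read as $b \cdot f(0) = b$ under the convention.

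\textbf{Combining.} Dividing the numerator bound by the denominator bound gives
\begin{equation*}
    \frac{\log(\max\{1-b,\,1-c\})}{\log(1-a)} \leq \frac{b\log(1/(1-c))}{ac},
\end{equation*}
which is the claim. The only step requiring any care is handling the degenerate cases $b \geq 1$ and $c = 0$ consistently with the stated convention. The rest reduces to the monotonicity of $f$, so I expect no substantial obstacle.
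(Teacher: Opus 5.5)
Your proof is correct and follows the paper's argument. You bound $\log(1/(1-a)) \geq a$, then compare the numerator at $m=\min\{b,c\}$ with $c$ through the slope inequality $f(m)\le f(c)$, and finish with $m\le b$. The one difference is that you prove both elementary inequalities from the nonnegative-coefficient power series of $\log(1/(1-x))$. The paper instead uses the tangent-line bound and Jensen's inequality for the concave function $\log(1-x)$, applied to the convex combination of $0$ and $c$, and it treats $c=0$ as a separate case.
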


\begin{theorem}
    \label{theorem:discountedConvergenceRate}
    Let Assumptions~\ref{assumption:model}~and~\ref{assumption:restrictedDiscount} hold and fix some $\varepsilon \in (0,1)$.  The following result holds for any $B \in (0,1-\gamma)$.  Define
    \begin{equation}
        \label{eq:discountedConvergenceParameters}
        C := \max \left \{\frac{\gamma \log(1/\Phi)}{B \Phi (1-\Phi)},\, 1 \right \} \quad \text{and} \quad p := 1 - \frac{\log(2C)}{\log (\gamma + B)}.
    \end{equation}
    Then, for all iterations $j \in \mathbb{Z}_{\geq 0}$ that satisfy
    \begin{equation}
        \label{eq:iterationBound}
        j \geq \left ( \frac{\log(1/\underline{\pi})}{B} \right ) \left (\frac{1}{\varepsilon} \right )^p,
    \end{equation}
    it holds that $V^*(s) - V_j(s) \leq \varepsilon$ for all $s \in \mathcal{S}'$.
\end{theorem}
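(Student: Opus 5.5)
The plan is to run SC as an approximate value iteration: pick block endpoints $\{j_i\}$ and per-block errors $\varepsilon_i = B\Delta_i$, then combine Lemma~\ref{lemma:AVIconvergence} (AVI error propagation) with Proposition~\ref{proposition:convergencetoAVI} (SC-to-AVI error). I would take $\Delta_i := (\gamma+B)^i$. The geometric choice is forced by Lemma~\ref{lemma:AVIconvergence}. If $V'_{j_\ell}(s) - V_{j_{\ell+1}}(s) \le B(\gamma+B)^\ell$ for all $\ell \le i$, then
\begin{equation*}
V^*(s) - V_{j_{i+1}}(s) \le \gamma^{i+1} + B\sum_{\ell=0}^{i}\gamma^{i-\ell}(\gamma+B)^\ell = (\gamma+B)^{i+1},
\end{equation*}
because $B\sum_{\ell=0}^{i}\gamma^{i-\ell}(\gamma+B)^\ell = (\gamma+B)^{i+1} - \gamma^{i+1}$ (a telescoping identity). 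So by induction on $i$ it suffices to choose $j_{i+1}$ so that Proposition~\ref{proposition:convergencetoAVI} certifies the block error $B\Delta_i$, assuming $V^* - V_{j_\ell} \le \Delta_\ell$ for all $\ell \le i$. The base case $\Delta_0 = 1$ holds since $V^* \le 1$.

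\textbf{Bounding the ratios in Proposition~\ref{proposition:convergencetoAVI}.} Fix $\ell < i$ and write $a = a^*_{j_\ell}(s)$ and $r_\ell := Q_{j_\ell}(s,a)/Q^*(s,a)$. By the Q-function relation~\eqref{eq:valueQRelation} and the inductive hypothesis,
\begin{equation*}
Q^*(s,a) - Q_{j_\ell}(s,a) = \gamma\sum_{s'} P(s'\cond s,a)\bigl(V^*(s') - V_{j_\ell}(s')\bigr) \le \gamma\Delta_\ell .
\end{equation*}
By Lemma~\ref{lemma:monotonic} and the definition of $a$, $Q^*(s,a) \ge Q_{j_\ell}(s,a) \ge \max_{a'} Q_0(s,a') \ge \Phi$. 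Since also $Q^*(s,a) \le 1$, this gives
\begin{equation*}
r_\ell \ge \max\{1 - \gamma\Delta_\ell/\Phi,\ \Phi\} = \max\{1-b,\,1-c\}, \qquad b = \gamma\Delta_\ell/\Phi,\ c = 1-\Phi .
\end{equation*}
Requiring the bound in Proposition~\ref{proposition:convergencetoAVI} to be at most $B\Delta_i$ and taking logarithms, it suffices that
\begin{equation*}
1 + j_{i+1} \ \ge\ \frac{\log(1/\underline{\pi})}{\log(1/(1-B\Delta_i))} + \sum_{\ell<i}(j_{\ell+1}-j_\ell)\,\frac{\log r_\ell}{\log(1-B\Delta_i)} .
\end{equation*}
Apply Lemma~\ref{lemma:logRatioBound} with $a = B\Delta_i$ (which lies in $(0,1)$ since $B < 1-\gamma$), $b = \gamma\Delta_\ell/\Phi$ and $c = 1-\Phi$. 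Each summand's ratio is then at most $\gamma\Delta_\ell\log(1/\Phi)/(\Phi B\Delta_i(1-\Phi)) \le C\Delta_\ell/\Delta_i$. Using $\log(1/(1-x)) \ge x$ on the first term, a sufficient recursive condition is
\begin{equation*}
1 + j_{i+1} \ \ge\ \frac{\log(1/\underline{\pi})}{B\Delta_i} + C\sum_{\ell<i}(j_{\ell+1}-j_\ell)\,\frac{\Delta_\ell}{\Delta_i}.
\end{equation*}

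\textbf{Choosing the indices.} Set $A := \log(1/\underline{\pi})/B$ and $\rho := 2C/(\gamma+B) > 1$, and try $j_{i} \approx A\rho^{i}$ for $i \ge 1$. The weighted sum is then geometric: $j_{\ell+1}\Delta_\ell/\Delta_i$ scales like $(2C)^{\ell}(\gamma+B)^{-i}$. The factor $2$ in $2C$ (with $C \ge 1$) is what should make the sum over $\ell < i$, together with the leading term $A(\gamma+B)^{-i}$, fit under $A\rho^{i+1}$. Checking this inequality carefully, including integer rounding of the $j_i$ (and keeping $j_0 = 0$), is where I expect the main difficulty; the constants in $C$ and $p$ are tight only up to this slack. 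If the pure geometric choice fails by a constant, I would instead define $j_{i+1}$ as the smallest integer satisfying the recursive condition and prove $j_{i+1} \le A\rho^{i+1}$ by induction.

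\textbf{Conclusion.} Let $k$ be the smallest integer with $(\gamma+B)^k \le \varepsilon$, so $V^* - V_{j_k} \le \varepsilon$. By minimality, $(\gamma+B)^{k-1} > \varepsilon$. Since $p = 1 - \log(2C)/\log(\gamma+B)$, we have $(1/\varepsilon)^p = (1/\varepsilon)\cdot(1/\varepsilon)^{-\log(2C)/\log(\gamma+B)}$. Because $(1/\varepsilon)^{-\log(2C)/\log(\gamma+B)}$ is increasing in $1/\varepsilon$, the bound $1/\varepsilon \ge (\gamma+B)^{-(k-1)}$ gives
\begin{equation*}
(1/\varepsilon)^p \ \ge\ (1/\varepsilon)\,(2C)^{k-1} \ \ge\ (\gamma+B)^{-(k-1)}(2C)^{k-1} = \rho^{k-1}.
\end{equation*}
So~\eqref{eq:iterationBound} yields $j \ge A\rho^{k-1}$, one factor $\rho$ short of the target $j \ge A\rho^{k} \ge j_k$. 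This gap has to be closed in the previous step: either the minimal-recursion variant must be shown to give $j_k \le A\rho^{k-1}$, or the indices must be shifted so that $j_i \approx A\rho^{i-1}$ for $i \ge 1$. Once $j \ge j_k$ holds, Lemma~\ref{lemma:monotonic} gives $V^*(s) - V_j(s) \le V^*(s) - V_{j_k}(s) \le \varepsilon$ for every $s \in \mathcal{S}'$.
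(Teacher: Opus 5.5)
Your overall structure matches the paper's proof: $\Delta_i=(\gamma+B)^i$, $\varepsilon_i=B\Delta_i$, induction through Lemma~\ref{lemma:AVIconvergence} and Proposition~\ref{proposition:convergencetoAVI}, the bound $r_\ell\ge\max\{\Phi,\,1-\gamma\Delta_\ell/\Phi\}$, Lemma~\ref{lemma:logRatioBound} with $(b,c)=(\gamma\Delta_\ell/\Phi,\,1-\Phi)$, and the same final accounting with $k$. The paper applies the lemma with $a=\varepsilon_\ell$ and keeps $1/\log(1-\varepsilon_i)$ as a common factor, whereas you use $a=B\Delta_i$ plus $\log(1/(1-x))\ge x$; that difference is immaterial, and your recursive sufficient condition is correct.

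The gap is in the index step. You never fix the $j_i$ or verify the recursion. Your trial choice $j_i\approx A\rho^i$ is also off by one index: it gives $j_k\approx A\rho^k$, while \eqref{eq:iterationBound} only guarantees $j\ge A\rho^{k-1}$. That is the factor-$\rho$ shortfall you detect in your conclusion. This verification is exactly what produces the $2C$ in $p$, so as written the proposal does not yet prove the theorem.

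The shifted choice you mention works with no extra slack. Take $j_0=0$ and $j_{i+1}:=\lfloor A\rho^i\rfloor$ for $i\ge 0$; the paper uses the slightly smaller $\lfloor(\log\underline{\pi}/\log(1-\varepsilon_i))(2C)^i\rfloor$. This sequence is non-decreasing since $\rho>1$, and it satisfies $j_{\ell+1}\Delta_\ell\le A(2C)^\ell$.

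In your sufficient condition, drop the nonnegative terms $j_\ell\Delta_\ell$. Then use three facts:
\begin{itemize}
\item $C(2C)^\ell=2^\ell C^{\ell+1}\le 2^\ell C^i$ for $\ell\le i-1$, which is where $C\ge 1$ enters;
\item $1\le C^i$;
\item $1+\sum_{\ell<i}2^\ell=2^i$.
\end{itemize}
Together these give
\begin{equation*}
\frac{A}{\Delta_i}+\frac{C}{\Delta_i}\sum_{\ell<i}(j_{\ell+1}-j_\ell)\Delta_\ell\le\frac{A}{\Delta_i}\Bigl(1+\sum_{\ell<i}2^\ell C^{i}\Bigr)\le\frac{A}{\Delta_i}(2C)^i=A\rho^i\le 1+j_{i+1}.
\end{equation*}
Hence the induction closes and $V^*-V_{j_k}\le\Delta_k\le\varepsilon$. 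Your own final computation then gives $j_k\le A\rho^{k-1}\le j$, noting that $k\ge 1$ because $\varepsilon<1$.

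This calculation, the identity $2^i=1+\sum_{\ell<i}2^\ell$ combined with $C^{\ell+1}\le C^i$, is the heart of the paper's proof of~\eqref{eq:EpsilonIterRelation}. It has to be carried out in your argument rather than deferred.
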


\begin{proof}
    Fix some $\varepsilon \in (0,1)$ and $B \in (0,1-\gamma)$.  For $i \in \mathbb{Z}_{\geq 0}$, define $\Delta_i := (\gamma + B)^i$ and $\varepsilon_i := B \Delta_i$.
    We also define iteration indices
    \begin{equation*}
        k := \ceil{\frac{\log(\varepsilon)}{\log(\gamma + B)}} \quad \text{and} \quad j_{i+1} := \floor{\left ( \frac{\log(\underline{\pi})}{\log(1 - \varepsilon_i)} \right ) (2C)^i} \quad \forall i \in \mathbb{Z}_{\geq 0},
    \end{equation*}
    with $j_0 = 0$.  From the definitions, we establish two relations between these quantities: for all $i \in \mathbb{Z}_{\geq 0}$, it holds that
    \begin{equation}
        \label{eq:EpsilonIterRelation}
        \varepsilon_i \geq 1 - \left (\underline{\pi} \prod_{\ell=0}^{i-1} \left ( \max \left \{\Phi,\, 1 - \frac{\gamma \Delta_\ell}{\Phi} \right \} \right )^{(j_{\ell+1} - j_\ell)} \right )^{\frac{1}{1+j_{i+1}}}
    \end{equation}
    and
    \begin{equation}
        \label{eq:DeltaEpsilonRelation}
        \Delta_i = \gamma^i + \sum_{\ell=0}^{i-1} \gamma^{i-\ell-1} \varepsilon_\ell.
    \end{equation}

    To show~\eqref{eq:EpsilonIterRelation}, we have that
    \begin{equation*}
        \begin{aligned}
            j_{i+1} + 1 & \geq \left (\frac{\log(\underline{\pi})}{\log(1 - \varepsilon_i)} \right ) (2 C )^i\\
            & = \left (\frac{\log(\underline{\pi})}{\log(1 - \varepsilon_i)} \right ) \left (1 + \sum_{\ell=0}^{i-1} 2^\ell \right ) C^i\\
            & \geq \left (\frac{\log(\underline{\pi})}{\log(1 - \varepsilon_i)} \right ) \left (1 + \sum_{\ell=0}^{i-1} \frac{\gamma \log(1/\Phi)}{B \Phi (1-\Phi)} (2C)^\ell\right )\\
            & \geq \left (\frac{\log(\underline{\pi})}{\log(1 - \varepsilon_i)} \right ) \left (1 + \sum_{\ell=0}^{i-1} \frac{\log(\max\{\Phi,\, 1-\gamma \Delta_\ell / \Phi \})}{\log(1 - \varepsilon_\ell)} (2C)^\ell \right )\\
            & \geq \left (\frac{1}{\log(1 - \varepsilon_i)} \right ) \left (\log(\underline{\pi}) + \sum_{\ell=0}^{i-1} \log(\max \{\Phi,\, 1 - \gamma \Delta_\ell / \Phi \})j_{\ell+1} \right )\\
            & \geq \left (\frac{1}{\log(1 - \varepsilon_i)} \right ) \left (\log(\underline{\pi}) + \sum_{\ell=0}^{i-1} \log(\max \{\Phi,\, 1 - \gamma \Delta_\ell / \Phi \})(j_{\ell+1} - j_\ell) \right ),
        \end{aligned}
    \end{equation*}
    where the first relation follows by definition of $j_{i+1}$, the second from the identity $2^n -1= \sum_{i=0}^{n-1} 2^i$, the third from $C^i \geq C^{\ell+1}$ for $\ell \leq i-1$, the fourth from Lemma~\ref{lemma:logRatioBound} with $(a,b,c) = (\varepsilon_\ell, \gamma \Delta_\ell/\Phi,1-\Phi)$, and the fifth by definition of $j_{\ell+1}$.  Rearranging the final inequality gives that
    \begin{equation*}
        \log(1 - \varepsilon_i) \leq \frac{1}{1 + j_{i+1}} \left ( \log(\underline{\pi}) + \sum_{\ell=0}^{i-1} \log(\max \{\Phi,\, 1 - \gamma \Delta_\ell / \Phi \})(j_{\ell+1} - j_\ell)\right),
    \end{equation*}
    and exponentiation of both sides yields~\eqref{eq:EpsilonIterRelation}.
    
    To show~\eqref{eq:DeltaEpsilonRelation}, observe as a base case that $\Delta_0 = 1 = \gamma^0$.  Suppose for induction that~\eqref{eq:DeltaEpsilonRelation} holds at index $i \in \mathbb{Z}_{\geq 0}$.  Then,
    \begin{equation*}
        \Delta_{i+1} = (\gamma + B) \Delta_i = \varepsilon_i + \gamma^{i+1} + \sum_{\ell=0}^{i-1} \gamma^{i-\ell} \varepsilon_\ell = \gamma^{i+1} + \sum_{\ell=0}^{i} \gamma^{i-\ell} \varepsilon_\ell,
    \end{equation*}
    and~\eqref{eq:DeltaEpsilonRelation} holds at index $i+1$.  By induction, we conclude that~\eqref{eq:DeltaEpsilonRelation} holds for all $i \in \mathbb{Z}_{\geq 0}$.

    We now proceed to prove the result.  Suppose for induction that, at index $i \in \mathbb{Z}_{\geq 0}$, the following condition holds for all $s \in \mathcal{S}'$ and all $\ell \in \mathbb{Z}_{\geq 0}$ such that $\ell \leq i-1$:
    \begin{equation}
        \label{eq:GapInductiveHypothesis}
        V'_{j_{\ell}}(s) - V_{j_{\ell+1}}(s) \leq \varepsilon_{\ell}.
    \end{equation}
    As a base case, this condition is trivially satisfied when $i=0$.  By Lemma~\ref{lemma:AVIconvergence}, we have for all $\ell \leq i$ and $s \in \mathcal{S}'$ that 
    \begin{equation}
        \label{eq:absoluteErrorDelta}
        V^*(s) - V_{j_{\ell}}(s) \leq \gamma^{\ell} + \sum_{m=0}^{\ell-1} \gamma^{\ell-m-1} \varepsilon_m = \Delta_{\ell},
    \end{equation}
    where the final equality is due to~\eqref{eq:DeltaEpsilonRelation}.  Then, for all $\ell \leq i$ and $s \in \mathcal{S}'$, we have
    \begin{equation*}
        \begin{aligned}
            \frac{Q_{j_\ell}(s,a^*_{j_\ell}(s))}{Q^*(s,a^*_{j_\ell}(s))} & = 1 - \left ( \frac{Q^*(s,a^*_{j_\ell}(s)) - Q_{j_\ell}(s,a^*_{j_\ell}(s))}{Q^*(s,a^*_{j_\ell}(s))} \right )\\
            & \geq 1 - \left ( \frac{\gamma \sum_{s' \in \mathcal{S}} P(s' \cond s,a^*_{j_\ell}(s)) \left( V^*(s') - V_{j_\ell}(s') \right)}{\Phi}\right ) \geq 1 - \frac{\gamma \Delta_\ell}{\Phi},
        \end{aligned}
    \end{equation*}
    as $\Phi \leq Q_0(s,a^*_0(s)) \leq Q_{j_\ell}(s,a^*_{j_\ell}(s)) \leq Q^*(s,a^*_{j_\ell}(s))$ by the monotonicity of the Q-functions (Lemma~\ref{lemma:monotonic}).
    We also have under Assumption~\ref{assumption:restrictedDiscount} that
    \begin{equation*}
        \frac{Q_{j_\ell}(s,a^*_{j_\ell}(s))}{Q^*(s,a^*_{j_\ell}(s))} \geq \Phi,
    \end{equation*}
    as $Q^*(s,a) \leq 1$.  Therefore, by Proposition~\ref{proposition:convergencetoAVI}, we have for all $s \in \mathcal{S}'$ that
    \begin{equation*}
        \begin{aligned}
            V'_{j_i}(s) - V_{j_{i+1}}(s) & \leq 1 - \left ( \underline{\pi} \prod_{\ell=0}^{i-1} \left (\frac{Q_{j_\ell}(s,a^*_{j_\ell}(s))}{Q^*(s,a^*_{j_\ell}(s))} \right ) ^{(j_{\ell+1}-j_\ell)} \right ) ^{\frac{1}{1 + j_{i+1}}}\\
            & \leq 1 - \left ( \underline{\pi} \prod_{\ell=0}^{i-1} \left (\max \left \{\Phi,\, 1 - \frac{\gamma \Delta_\ell}{\Phi} \right \}\right ) ^{(j_{\ell+1}-j_\ell)} \right ) ^{\frac{1}{1 + j_{i+1}}} \leq \varepsilon_i,
        \end{aligned}
    \end{equation*}
    where the final inequality follows from~\eqref{eq:EpsilonIterRelation}.  The inductive hypothesis~\eqref{eq:GapInductiveHypothesis} is then satisfied at index $i+1$. Then, both~\eqref{eq:GapInductiveHypothesis}~and~\eqref{eq:absoluteErrorDelta} hold for any $i \in \mathbb{Z}_{\geq 0}$.  
    
    From~\eqref{eq:absoluteErrorDelta}, for all $s \in \mathcal{S}'$,
    \begin{equation*}
        V^*(s) - V_{j_k}(s) \leq (\gamma + B)^k \leq \varepsilon.
    \end{equation*}
    Therefore, at the $j_k$-th iteration, the optimality gap is no larger than $\varepsilon$.  For $j$ satisfying~\eqref{eq:iterationBound}, 
    \begin{equation*}
        \begin{aligned}
            j & \geq \left (\frac{\log(1/\underline{\pi})}{B} \right ) \left (\frac{1}{\varepsilon}\right) (\varepsilon)^{\frac{\log(2C)}{\log(\gamma + B)}} = \left (\frac{\log(1/\underline{\pi})}{B} \right ) (\gamma+B)^{\frac{-\log(\varepsilon)}{\log(\gamma + B)}}(2C )^{\frac{\log(\varepsilon)}{\log(\gamma + B)}}\\
            & \geq \left (\frac{\log(1/\underline{\pi})}{B} \right ) \left ( \frac{2C}{\gamma + B} \right ) ^{k-1} = \left (\frac{\log(1/\underline{\pi})}{\varepsilon_{k-1}} \right ) (2C)^{k-1}\geq \left ( \frac{\log(\underline{\pi})}{\log(1-\varepsilon_{k-1})} \right ) (2C)^{k-1} \geq j_k,
        \end{aligned}
    \end{equation*}
    where we apply the property $\log(1-x) \leq -x$ for $x \in (-\infty,1)$ in the third inequality.  By Lemma~\ref{lemma:monotonic}, we then have $V^*(s) - V_j(s) \leq \varepsilon$, as $j \geq j_k$.
\end{proof}

Corollary~\ref{corollary:exactRate} gives an explicit, parameter-free convergence rate that follows from Theorem~\ref{theorem:discountedConvergenceRate} by taking $B = \min \{\gamma/\Phi^2,\ (1-\gamma)/2\}$.  The resulting rate falls into two regimes that differ in the exponent $p$.  A proof of this result is provided in Appendix~\ref{app:proofs}.

\begin{corollary}
    \label{corollary:exactRate}
    Let Assumptions~\ref{assumption:model}~and~\ref{assumption:restrictedDiscount} hold and fix some $\varepsilon \in (0,1)$.  Define 
    \begin{gather*}
    j^* := \begin{cases}
             \left (\frac{2 \log(1/\underline{\pi})}{1 - \gamma} \right ) \left (\frac{1}{\varepsilon} \right )^{p_1} \quad & \text{if}\ \  2 \gamma \geq  \Phi^2 (1-\gamma),\\
            \left (\frac{\log(1/\underline{\pi}) \Phi^2}{\gamma} \right ) \left ( \frac{1}{\varepsilon} \right )^{p_2} \quad & \text{otherwise},
            \end{cases}\\
        \text{where} \quad p_1 := 1 - \frac{\log \left ( \frac{4 \gamma}{\Phi^2 (1-\gamma)}\right )}{\log((1+\gamma)/2)} \quad \text{and} \quad p_2 := 1 - \frac{\log(2)}{\log(\gamma+\gamma/\Phi^2)}.
    \end{gather*}
    Then, for $j \in \mathbb{Z}_{\geq 0}$ such that $j \geq j^*$, it holds that $V^*(s) - V_j(s) \leq \varepsilon$ for all $s \in \mathcal{S}'$.
\end{corollary}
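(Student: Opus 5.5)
The plan is to apply Theorem~\ref{theorem:discountedConvergenceRate} with $B := \min\{\gamma/\Phi^2,\,(1-\gamma)/2\}$ and check that the resulting bound~\eqref{eq:iterationBound} is at most $j^*$ in each regime. First, $B \in (0,1-\gamma)$ whenever $\gamma>0$. If $\gamma = 0$, the second regime applies with a vanishing coefficient, and I would treat it separately, either via Theorem~\ref{theorem:MABConvergenceRate}-type reasoning or by noting that the formula is degenerate. The case split $2\gamma \ge \Phi^2(1-\gamma)$ is exactly the condition $\gamma/\Phi^2 \ge (1-\gamma)/2$, so the first regime corresponds to $B=(1-\gamma)/2$ and the second to $B=\gamma/\Phi^2$.

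The key estimate is a bound on $C$. I would use $\log(1/\Phi)/(1-\Phi)\le 1/\Phi$, which follows from $\log x \le x-1$ applied to $x=1/\Phi$, giving $\log(1/\Phi)\le (1-\Phi)/\Phi$. This yields
\[
\frac{\gamma\log(1/\Phi)}{B\Phi(1-\Phi)} \le \frac{\gamma}{B\Phi^2}.
\]
In the second regime this equals $1$, so $C = 1$ and $2C = 2$. In the first regime it is $2\gamma/(\Phi^2(1-\gamma)) \ge 1$, so $2C \le 4\gamma/(\Phi^2(1-\gamma))$.

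Next, $p = 1 - \log(2C)/\log(\gamma+B)$ is increasing in $C$, because $\log(\gamma+B)<0$. Hence the upper bound on $C$ gives an upper bound on $p$. In regime one, $\gamma+B=(1+\gamma)/2$, so $p\le p_1$. In regime two, $\gamma+B=\gamma+\gamma/\Phi^2$, and $p = p_2$ exactly. Since $\varepsilon<1$, $(1/\varepsilon)^p \le (1/\varepsilon)^{p_i}$. For the coefficients, $\log(1/\underline{\pi})/B$ equals $2\log(1/\underline{\pi})/(1-\gamma)$ in regime one and $\log(1/\underline{\pi})\Phi^2/\gamma$ in regime two, which match $j^*$. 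Therefore any $j \ge j^*$ satisfies~\eqref{eq:iterationBound}, and the conclusion follows from Theorem~\ref{theorem:discountedConvergenceRate}.

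The main obstacle is the bound on $C$, namely the inequality $\log(1/\Phi)\le(1-\Phi)/\Phi$. One must also handle the edge case $\Phi=1$, where the expression should be read via its continuous extension as in Lemma~\ref{lemma:logRatioBound}, and the case $\gamma=0$, where $B$ would be zero and the theorem requires $B>0$.
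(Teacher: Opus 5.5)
Your proposal is correct and follows the paper's proof essentially step for step: the same choice $B = \min\{\gamma/\Phi^2,(1-\gamma)/2\}$, the same bound $\log(1/\Phi)/(1-\Phi)\le 1/\Phi$ giving $C\le 2\gamma/(\Phi^2(1-\gamma))$ in the first regime and $C=1$ in the second, and the same observation that $p$ is increasing in $C$, which lets you conclude via Theorem~\ref{theorem:discountedConvergenceRate}. Your remark about $\gamma=0$ is fair and is something the paper passes over: the paper asserts $B\in(0,1-\gamma)$ without comment, but at $\gamma=0$ one has $B=0$ and $j^*$ involves division by zero, so that case is degenerate rather than covered by the theorem.
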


Corollary~\ref{corollary:exactRate} illustrates the behavior of the convergence threshold $j^*$ as the values of $\gamma$ and $\Phi$ vary.  When $\gamma$ is large ($\sim 1$) and $\Phi$ is small ($\sim 0$), the convergence rate is $\mathcal{O}(1/\varepsilon^{p_1})$.  For small $\gamma$ and large $\Phi$, the convergence rate is $\mathcal{O}(1/\varepsilon^{p_2})$.  The exponents $p_1$ and $p_2$ increase as $\gamma$ increases and decrease as $\Phi$ increases.  Thus, the order $p$ of the convergence rate $\mathcal{O}(1/\varepsilon^p)$ is small when the problem is highly discounted ($\gamma \sim 0$) and when the best action under the initial policy yields large returns ($\Phi \sim 1$).

The single-period MDP described in Section~\ref{sec:singlePeriod} is equivalent to a maximally-discounted MDP, with $\gamma = 0$.  In Corollary~\ref{corollary:exactRate}, as $\gamma \rightarrow 0$, the second regime applies and $p_2 \rightarrow 1$.  For this high-discount setting, the limiting behavior of Corollary~\ref{corollary:exactRate} recovers an asymptotic convergence rate of $\mathcal{O}(1/\varepsilon)$, which is slower than the $\mathcal{O}(\log(1/\varepsilon))$ rate of Theorem~\ref{theorem:MABConvergenceRate}.  This suggests that there are key differences between single-period and general MDPs, or that Theorem~\ref{theorem:discountedConvergenceRate} gives a loose rate as $\gamma$ approaches zero.

\section{Discussion and Conclusion}

Theorem~\ref{theorem:limitConvergence} establishes that SC converges to a restricted optimal solution for well-behaved MDPs satisfying Assumption~\ref{assumption:model}.  Theorem~\ref{theorem:MABConvergenceRate} shows that SC converges to an $\varepsilon$-optimal policy within $\mathcal{O}(\log(1/\varepsilon))$ iterations for single-period MDPs, and Theorem~\ref{theorem:discountedConvergenceRate} shows that SC converges within $\mathcal{O}(1/\varepsilon^p)$ iterations for discounted MDPs, with an exponent $p \geq 1$ that depends strongly on problem data and may be arbitrarily large.  These results address several open questions on the convergence behavior of SC.  Our convergence rates give upper bounds on the worst-case performance of SC; SC may converge faster on typical instances, and further analysis may prove convergence at a faster rate.

Relative to comparable policy optimization algorithms, the SC convergence rates presented in this work are slow.  For single-period MDPs, selecting the action with the largest reward recovers an exact optimal policy in a single iteration.  For discounted MDP, value iteration and policy iteration both converge linearly, yielding an $\varepsilon$-optimal policy in $\mathcal{O}(\log(1/\varepsilon))$ iterations \citep{puterman2014markov}.  Policy iteration may even converge quadratically, requiring only $\mathcal{O}(\log(\log(1/\varepsilon)))$ iterations \citep[Thm.~6.4.8]{puterman2014markov}. 

The slow convergence of SC can be viewed as a feature.  \citet{russo2026success} describes SC as a ``conservative'' operator that does not deviate significantly from the incumbent policy $\pi_0$.  Changes in the policy under the SC update directly correspond to improvements in its performance \citep[][Prop.~4.4]{russo2026success}.  Although this conservatism inhibits efficient convergence, it permits stability in learning by avoiding drastic changes in the distribution of actions.  SC also exhibits stability by converging to an optimal \emph{restricted} solution, and cannot produce policies that take actions which were not taken by the initial policy $\pi_0$.

Another strength of SC is that it generates stochastic policies, in which actions that were supported under the initial policy remain supported at every iteration.  In contrast, value and policy iteration generate policies that deterministically select a single action.  Stochastic policies are desirable for many applications, especially in the context of generative AI \citep{fan2018hierarchical,wu2025role}.  Additionally, stochastic policies permit on-policy learning, where the Q-functions $Q_k(s,a)$ are approximated by sampling trajectories under the incumbent policy $\pi_k$.  As SC maintains non-zero probabilities on the set of supported actions, a sufficiently large number of trajectory samples will visit all supported state-action pairs, permitting an estimate of the Q-functions.  Because deterministic policies always select a single action, 
value and policy iteration are not compatible with Q-function estimation by on-policy sampling.  

Similar properties (conservatism and stochastic policy iterates) are shared by trust-region policy optimization \citep[TRPO;][]{schulman2015trust} and proximal policy optimization \citep[PPO;][]{schulman2017proximal}.  However, these methods require parameter tuning and generally do not have well-understood convergence rates, although some variants converge in $\mathcal{O}(1/\varepsilon^2)$ iterations \citep{shani2020adaptive}.  For comparison to TRPO and PPO, interesting open directions include designing on-policy variants of SC that estimate the Q-functions from trajectory samples and investigating the sample complexity of these variants.

\section*{Acknowledgements}
We thank Daniel Russo for introducing us to this topic and providing his perspectives on open directions and challenges. We also are grateful to the anonymous referees whose comments suggested a path to tighten the single-period convergence rates.

{
\small
\bibliographystyle{unsrtnat}
\bibliography{SCConvergence}
}

\newpage
\appendix

\section{Comparison to Reward-Weighted Regression}
\label{app:RLRcompare}

Reward-weighted regression (RWR) is an algorithm that applies expectation-maximization to improve policies of MDPs.  RWR generates policy iterates by the rule
\begin{equation*}
    \pi_{k+1} ( a \cond s) \in \underset{\pi \in \Pi}{\argmax} \ \ \mathbb{E}_{\pi_k} \left[ Q_{\pi_k}(s,a) \log(\pi(a \cond s))\right ].
\end{equation*}
\citet{strupl2022reward} prove two key results on RWR for compact state and action spaces, under the assumption that all rewards are strictly positive ($R(s,a) > 0$) and all state-action pairs are supported in the initial policy ($\pi(a \cond s) > 0$).  First, the RWR update is equivalent to the SC update~\eqref{eq:successConditioning} \citep[][Thm.~3.1]{strupl2022reward}.  Second, RWR converges in the limit to an optimal policy \citep[][Thm.~5.1]{strupl2022reward}.  Together, these results demonstrate that both SC and RWR converge to an optimal solution.  This convergence property corresponds to our Theorem~\ref{theorem:limitConvergence}, under a modified set of assumptions.

\citet{strupl2022reward} also analyze the asymptotic convergence rate of RWR, showing that the optimality gap is bounded by a geometric sequence after sufficiently many iterations.  Theorem~\ref{theorem:asymptoticRate} presents a version of this result.  We take $\mathcal{A}^*_0(s) := \{a \in \mathcal{A}_0(s) \,:\, Q^*(s,a) = V^*(s)\}$ as the set of optimal supported actions at state $s \in \mathcal{S}'$ and define the following parameters:
\begin{equation*}
    \lambda := \max_{s \in \mathcal{S}'} \ \max_{a \in \mathcal{A}_0(s) \setminus A^*_0(s)} \frac{Q^*(s,a)}{V^*(s)} \quad \text{and} \quad \alpha := \frac{2 \lambda}{0.9 + 1.1\lambda}.
\end{equation*}
The parameter $\lambda$ is the ratio between the Q-value of the second-best action and of the best action, and $\alpha < 1$ is the common ratio of the bounding geometric sequence.  

\begin{theorem}[\citep{strupl2022reward}, Corr.~5.1]
\label{theorem:asymptoticRate}
    Let Assumption~\ref{assumption:restrictedDiscount} hold, and suppose that $R(s,a) > 0$ for all $s \in \mathcal{S}$ and $a \in \mathcal{A}_0(s)$.  Let $n \in \mathbb{Z}_{\geq 0}$ such that $Q^*(s,a) - Q_{n}(s,a) < \frac{1 - \lambda}{20}$ for all $s \in \mathcal{S}'$ and $a \in \mathcal{A}_0(s)$.  Then, there exists some constant $F > 0$ such that $$V^*(s) - V_j(s) \leq F \cdot \alpha^{j-n}$$
    for all $j \in \mathbb{Z}_{\geq 0}$ with $j > n$ and all $s \in \mathcal{S}'$.
\end{theorem}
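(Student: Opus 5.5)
Once $Q_n$ is uniformly close to $Q^*$, I will show that the probability mass $\pi_j$ places on suboptimal supported actions decays geometrically with ratio $\alpha$. The optimality gap is then controlled by that mass through a performance-difference argument. Fix $s\in\mathcal{S}'$ and write $m_j(s):=\sum_{a\in\mathcal{A}_0(s)\setminus\mathcal{A}^*_0(s)}\pi_j(a\cond s)$ for the suboptimal mass. By the update~\eqref{eq:successConditioning}, for a suboptimal $a$ we have $\pi_{j+1}(a\cond s)/\pi_j(a\cond s)=Q_j(s,a)/V_j(s)$, so the key is to bound this ratio by $\alpha$ for all $j\geq n$.

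\emph{Step 1 (ratio bound).} For $j\ge n$, Lemma~\ref{lemma:monotonic} gives $Q_j\ge Q_n$, hence $Q^*(s,a)-Q_j(s,a)<\delta:=(1-\lambda)/20$ for every supported pair. For suboptimal $a$, use $Q_j(s,a)\le Q^*(s,a)\le\lambda V^*(s)$. For the denominator I would use $V_j(s)=\sum_a\pi_j(a\cond s)Q_j(s,a)$ together with $V_j(s)\ge V'_j(s)-\ldots$. This fails in general, because $\pi_j$ may still put little mass on optimal actions. So I would instead work with the total ratio $m_{j+1}/m_j\le \max_{a\ \mathrm{subopt}}Q_j(s,a)/V_j(s)$ and get a lower bound $V_j(s)\ge (1-m_j)(V^*(s)-\delta)+m_j\cdot 0$. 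This yields a recursion $m_{j+1}\le m_j\lambda V^*/((1-m_j)(V^*-\delta))$. The constants $0.9,1.1$ in $\alpha$ suggest the mechanism: once $m_j$ is below a threshold, we get $\frac{\lambda}{(1-m_j)(1-\delta/V^*)}\le \frac{2\lambda}{0.9+1.1\lambda}$. Note that $\alpha<1$ iff $\lambda<1$, and $\alpha$ is roughly the midpoint-type average of $\lambda$ and $1$.

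\emph{Step 2 (reaching the threshold).} Before the threshold is reached, $m_j$ need not shrink at rate $\alpha$. However, by Theorem~\ref{theorem:limitConvergence}, $V_j\to V^*$, which forces $m_j\to 0$ because suboptimal actions have $Q_j\le\lambda V^*<V^*$. Hence there is a finite $j_0$ after which the geometric recursion holds. The finitely many steps between $n$ and $j_0$ can be absorbed into the constant $F$, since the statement only asserts existence of $F$. This is the step that makes the result non-quantitative and explains why it does not give an explicit complexity.

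\emph{Step 3 (gap from mass).} I will relate $V^*-V_j$ to the masses by expanding $V^*(s)-V_j(s)=\sum_a\pi_j(a\cond s)(V^*(s)-Q^*(s,a))+\gamma\sum_a\pi_j(a\cond s)\sum_{s'}P(s'\cond s,a)(V^*(s')-V_j(s'))$. The first sum is at most $m_j(s)$ by Assumption~\ref{assumption:restrictedDiscount}. Taking the maximum over states gives $\|V^*-V_j\|_\infty\le \max_s m_j(s)+\gamma\|V^*-V_j\|_\infty$, which is only useful if I keep the time index consistent. Alternatively, I unroll the recursion with $V_{j}$ on the right replaced by the larger gap of earlier iterates via monotonicity. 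This gives $\|V^*-V_j\|_\infty\le \max_s m_j(s)/(1-\gamma)\le F\alpha^{j-n}$.

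The main obstacle is Step 1. The denominator $V_j(s)$ must be bounded below uniformly by something like $(0.9+1.1\lambda)V^*/2$, and this must be done in a way that also handles states where optimal-action mass is initially small. If this lower bound cannot be derived directly from $\delta$ alone, I fall back on the limit argument of Step 2 to start the geometric phase.
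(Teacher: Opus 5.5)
The paper gives no proof of this statement; it quotes it from \citet{strupl2022reward}. Your proposal is therefore a self-contained alternative built on the paper's own Theorem~\ref{theorem:limitConvergence}, and in substance it is correct. Reduced to its core, it runs as follows. For a supported suboptimal $a$, $Q_j(s,a)\le Q^*(s,a)\le\lambda V^*(s)$, so $m_{j+1}(s)\le m_j(s)\,\lambda V^*(s)/V_j(s)$. By Lemma~\ref{lemma:monotonic} and Theorem~\ref{theorem:limitConvergence}, the factor $\lambda V^*(s)/V_j(s)$ is nonincreasing in $j$ with limit $\lambda<\alpha$. Hence there is a $j_0$, uniform over the finitely many states, with $m_j(s)\le\alpha^{j-j_0}$ for all $j\ge j_0$. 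The same-index performance-difference identity then gives $\|V^*-V_j\|_\infty\le\max_s m_j(s)/(1-\gamma)$, and the indices $n<j<j_0$ are absorbed into $F$ using $V^*-V_j\le 1$. Before invoking Lemma~\ref{lemma:monotonic} and Theorem~\ref{theorem:limitConvergence}, state that the hypotheses imply Assumption~\ref{assumption:model}: $\gamma<1$ gives unique finite value functions, and $V_0(s)\ge\sum_a\pi_0(a\cond s)R(s,a)>0$.

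Two parts of your write-up need correcting. In Steps~1--2, the $\delta$-based bound $V_j\ge(1-m_j)(V^*-\delta)$ combined with $m_j\to 0$ does not reach ratio $\alpha$ in general. It yields the limiting factor $\lambda V^*(s)/(V^*(s)-\delta)$, which is at most $\alpha$ only if $V^*(s)\ge 1/11$, whereas Assumption~\ref{assumption:restrictedDiscount} allows smaller $V^*(s)$. The fallback you name at the end is the right one, but apply it to the denominator directly. Since $(0.9+1.1\lambda)/2=\lambda/\alpha<1$, Theorem~\ref{theorem:limitConvergence} gives $V_j(s)\ge\lambda V^*(s)/\alpha$ for all large $j$, so $m_j\to 0$ is never needed.

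In Step~3, the time index is already consistent: $Q^*(s,a)-Q_j(s,a)=\gamma\sum_{s'}P(s'\cond s,a)(V^*(s')-V_j(s'))$ involves the same $j$. The inequality therefore closes at once because $\|V^*-V_j\|_\infty<\infty$. Your alternative of passing to earlier iterates would leave a term $\gamma^{j-j_0}\|V^*-V_{j_0}\|_\infty$ in the bound, which does not decay at rate $\alpha$ when $\gamma>\alpha$.

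Finally, as you observe, your route never uses the hypothesis on $n$ or the constants $0.9$, $1.1$, $1/20$. It gives $V^*-V_j=\mathcal{O}((\lambda+\eta)^j)$ for every $\eta>0$, but with no control on $j_0$, and hence none on $F$ in terms of data at iteration $n$. That is enough here only because $F$ is merely asserted to exist. It also makes transparent the remark in Appendix~\ref{app:RLRcompare} that this result yields no explicit iteration complexity.
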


In Theorem~\ref{theorem:asymptoticRate}, the optimality gap begins to converge linearly at the $n$-th iteration, where $n$ is chosen as an iteration where the gap between the incumbent and optimal Q-functions is sufficiently small.  This approach does not characterize how large $n$ must be to ensure the condition is met; the coefficient $F$ may also grow arbitrarily large with $n$.  In contrast, Theorems~\ref{theorem:MABConvergenceRate}~and~\ref{theorem:discountedConvergenceRate} give explicit convergence rates which quantify the number of iterations required to guarantee a bound on the optimality gap.

The setting of Theorem~\ref{theorem:asymptoticRate} has similarities to the single-period MDP described in Section~\ref{sec:singlePeriod}.  First, the condition $Q^*(s,a) - Q_n(s,a) < \frac{1-\lambda}{20}$ ensures that the Q-functions correctly indicate which action is optimal; that is, there is no $a \not \in \mathcal{A}^*_0(s)$ such that $Q_n(s,a) > Q_n(s,a^*)$, where $a^* \in \mathcal{A}^*_0(s)$.  Similarly, for single-period MDPs, the Q-functions $Q_j(s^\circ,a)$ are exactly the rewards $R(s^\circ,a)$ and therefore indicate the set of optimal actions at any iteration $j$.  Second, the geometric rate in Theorem~\ref{theorem:asymptoticRate} depends on the ratio $\lambda$ between the values of the first- and second-best actions.  The proof of Theorem~\ref{theorem:MABConvergenceRate} similarly gives that $V^*(s^\circ) - V_k(s^\circ) \leq \frac{1-D}{DE^k}$, where the common ratio $1/E$ is equivalent to $\lambda$.

\section{Deferred Proofs}
\label{app:proofs}

\subsection*{Proof of Lemma~\ref{lemma:monotonic}}

\begin{proof}
    Let $B_{\pi}$ be the Bellman operator under policy $\pi$, defined for $V : \mathcal{S} \rightarrow \mathbb{R}$ as
    \begin{equation*}
        (B_{\pi} V)(s) = \sum_{a \in \mathcal{A}} \pi(a \cond s) \left (R(s,a) + \gamma \sum_{s' \in \mathcal{S}} P(s' \cond s,a) V(s') \right ).
    \end{equation*}
    This operator preserves monotonicity; that is, $V_{+} \succeq V_{-}$ implies that $B_{\pi} V_{+} \succeq B_{\pi} V_{-}$, where $\succeq$ is the component-wise inequality over states $s \in \mathcal{S}$.  Under Assumption~\ref{assumption:model}, a fixed point of $B_{\pi}$ is the unique solution $V_\pi$ to the Bellman equations~\eqref{eq:bellmanValue}.

    Let $B_k := B_{\pi_k}$.  Consider the sequence $\{B^{(j)}_{k+1} V_{k}\}_{j=0}^\infty$, where the superscript $(j)$ denotes that $B_{k+1}$ is applied $j$ times to $V_{k}$.  Observe that, for $s\in \mathcal{S}'$,
    \begin{equation*}
        \begin{aligned}
            (B_{k+1} V_{k}) (s) & = \sum_{a \in \mathcal{A}} \pi_{k+1}(a \cond s) Q_{k}(s,a) = \sum_{a \in \mathcal{A}} \frac{\pi_k(a \cond s) Q_{k}(s,a)^2}{V_{k}(s)}\\
            & \geq \frac{1}{V_{k}(s)}\left ( \sum_{a \in \mathcal{A}} \pi_k(a \cond s) Q_{k}(s,a) \right )^2 = V_{k}(s),
        \end{aligned}
    \end{equation*}
    where the inequality is due to Jensen's inequality.  Thus, $B_{k+1} V_{k} \succeq V_{k}$.  By the preservation of monotonicity, $B_{k+1}^{(j+1)} V_{k} \succeq B_{k+1}^{(j)} V_{k} \succeq V_k$ for any $j \in \mathbb{Z}_{\geq 0}$.
    
    Under Assumption~\ref{assumption:model}, there exist some finite optimal value functions $\overline{V}$ and Q-functions $\overline{Q}$ such that
    \begin{equation*}
        \overline{V}(s) = \max_{a \in \mathcal{A}} \ \overline{Q}(s,a) \quad \text{and} \quad \overline{Q}(s,a) = R(s,a) + \gamma \sum_{s' \in \mathcal{S}} P(s' \cond s,a) \overline{V}(s').
    \end{equation*}
    Then,
    \begin{equation*}
        (B_{k+1} \overline{V})(s) = \sum_{a \in \mathcal{A}} \pi_{k+1}(a \cond s) \overline{Q}(s,a) \leq \max_{a \in \mathcal{A}} \ \overline{Q}(s,a) = \overline{V}(s),
    \end{equation*}
    and $\overline{V} \succeq B_{k+1}\overline{V}$.  Again by the preservation of monotonicity, $\overline{V} \succeq B_{k+1}^{(j)}\overline{V}$ for all $j \in \mathbb{Z}_{\geq 0}$.  As $\overline{V} \succeq V_k$, we have $B^{(j)}_{k+1} \overline{V} \succeq B^{(j)}_{k+1} V_k$.  Thus, $\overline{V} \succeq B_{k+1}^{(j)} V_k$ and the sequence $\{B^{(j)}_{k+1} V_{k}\}_{j=0}^\infty$ is bounded.

    Therefore, $\{B^{(j)}_{k+1} V_{k}\}_{j=0}^\infty$ is a bounded non-decreasing sequence and has a limit. By the continuity of the operator $B_{k+1}$, we have
    \begin{equation*}
        L := \lim_{j \rightarrow \infty} \ B^{(j)}_{k+1} V_{k} = B_{k+1}\lim_{j \rightarrow \infty} \ B^{(j-1)}_{k+1} V_{k} = B_{k+1} L.
    \end{equation*}
    Thus, $L$ is a fixed point of $B_{k+1}$, and therefore $L = V_{k+1}$. By the monotonicity of the sequence, we conclude that $V_{k+1} = L \succeq V_k$.  By~\eqref{eq:valueQRelation}, $V_{k+1} \succeq V_k$ directly implies that $Q_{k+1}(s,a) \geq Q_{k}(s,a)$ for all $s \in \mathcal{S}'$ and $a \in \mathcal{A}$.
\end{proof}

\subsection*{Proof of Lemma~\ref{lemma:AVIconvergence}}

\begin{proof}
    The proof follows standard analysis for the contractivity of Bellman operators.  For all $\ell \in \mathbb{Z}_{\geq 0}$ such that $\ell \leq i$,
    \begin{equation*}
        \begin{aligned}
            \max_{s \in \mathcal{S}}\ \left ( V^*(s) - V_{j_{\ell+1}}(s) \right ) & \leq \varepsilon_\ell + \max_{s \in \mathcal{S}}\ \left ( V^*(s) - V'_{j_\ell}(s) \right )\\
            & = \varepsilon_\ell + \max_{s \in \mathcal{S}}\ \left (\max_{a \in \mathcal{A}_0(s)}\ Q^*(s,a) - \max_{a \in \mathcal{A}_0(s)}\ Q_{j_\ell}(s,a) \right )\\
            & \leq \varepsilon_\ell + \max_{s \in \mathcal{S}}\ \max_{a \in \mathcal{A}_0(s)}\ \left (Q^*(s,a) - Q_{j_\ell}(s,a) \right )\\
            & \leq \varepsilon_\ell + \gamma\max_{s' \in \mathcal{S}}\ \left ( V^*(s') - V_{j_\ell}(s') \right ).
        \end{aligned}
    \end{equation*}
    Nesting the above inequalities over indices $\ell$ gives that
    \begin{equation*}
        \max_{s \in \mathcal{S}}\ \left (V^*(s) - V_{j_{i+1}}(s) \right ) \leq \gamma^{i+1} \max_{s \in \mathcal{S}}\ \left ( V^*(s) - V_0(s) \right ) + \sum_{\ell=0}^i \gamma^{i-\ell} \varepsilon_\ell.
    \end{equation*}
    By Assumptions~\ref{assumption:model}~and~\ref{assumption:restrictedDiscount}, $V^*(s) - V_0(s) \leq 1$ for all $s \in \mathcal{S}$, which gives the desired result.
\end{proof}

\subsection*{Proof of Lemma~\ref{lemma:logRatioBound}}

\begin{proof}
    If $c = 0$, we have 
    \begin{equation*}
        \frac{\log(\max\{1-b,\,1-c\})}{\log(1-a)} = 0 < \frac{b}{a} = \frac{b \log(1/(1-c))}{ac},
    \end{equation*}
    where we take $\log(1/(1-x))/x = 1$ at $x = 0$.
    
    Otherwise, $c > 0$.  Let $b' := \min\{b,\, c\} \in (0,1)$.  As the function $\log(1-x)$ is concave, it is upper bounded by its tangent at $x=0$, so $\log(1-a) \leq -a$.  As $\log(1-b') < 0$, this gives that
    \begin{equation*}
        \frac{\log(1-b')}{\log(1-a)} \leq \frac{-\log(1-b')}{a}.
    \end{equation*}
    Next, we write $b'$ as a convex combination of $c$ and $0$:
    \begin{equation*}
        b' = \left ( \frac{b'}{c} \right ) c + \left ( \frac{c-b'}{c} \right ) 0,
    \end{equation*}
    where the coefficients fall on $[0,1]$ by definition of $b'$.  By Jensen's inequality,
    \begin{equation*}
        \log(1-b') \geq \left ( \frac{b'}{c} \right ) \log(1-c) + \left ( \frac{c-b'}{c} \right ) \log(1) = \frac{b'\log(1-c)}{c}.
    \end{equation*}
    Therefore, we conclude that
    \begin{equation*}
        \frac{\log(1-b')}{\log(1-a)} \leq \frac{-b' \log(1-c)}{ac} \leq \frac{b\log(1/(1-c))}{ac}.
    \end{equation*}
\end{proof}

\subsection*{Proof of Corollary~\ref{corollary:exactRate}}

\begin{proof}
    Fix some $\varepsilon \in (0,1)$.  As $\Phi > 0$,
    \begin{equation}
        \label{eq:negLogRatio}
        \frac{\log(1/\Phi)}{1-\Phi} \leq \frac{1/\Phi - 1}{1-\Phi} = \frac{1}{\Phi},
    \end{equation}
    where the inequality follows from $\log(x) \leq x - 1$ for all $x \in \mathbb{R}_{> 0}$.  Let
    \begin{equation*}
        B := \min \left \{\frac{\gamma}{\Phi^2},\, \frac{1-\gamma}{2} \right \}.
    \end{equation*}
    By definition, $B \in (0,1-\gamma)$.  Let $C$ and $p$ be defined with respect to $B$ as in~\eqref{eq:discountedConvergenceParameters}, and consider any $j \in \mathbb{Z}_{\geq 0}$ such that $j \geq j^*$.

    First, consider the case where $2\gamma \geq \Phi^2(1-\gamma)$.  In this case, $B = (1-\gamma)/2$ and
    \begin{equation*}
        C = \max \left \{\frac{\gamma \log(1/\Phi)}{B \Phi (1-\Phi)},\, 1 \right \} \leq \max \left \{\frac{2\gamma}{\Phi^2(1-\gamma)},\, 1  \right \} = \frac{2\gamma}{\Phi^2(1-\gamma)},
    \end{equation*}
    where the inequality is due to~\eqref{eq:negLogRatio}.  Then, 
    \begin{equation*}
        p = 1 - \frac{\log(2C)}{\log((1+\gamma)/2)} \leq p_1,
    \end{equation*}
    as $\log((1+\gamma)/2) < 0$ by Assumption~\ref{assumption:restrictedDiscount}.  Thus,
    \begin{equation*}
        j \geq j^* = \left (\frac{2 \log(1/\underline{\pi})}{1 - \gamma} \right ) \left (\frac{1}{\varepsilon} \right )^{p_1} \geq \left (\frac{\log(1/\underline{\pi})}{B} \right ) \left (\frac{1}{\varepsilon} \right )^{p},
    \end{equation*}
    and $V^*(s) - V_j(s) \leq \varepsilon$ for all $s \in \mathcal{S}'$ by Theorem~\ref{theorem:discountedConvergenceRate}.

    Next, consider the other case, in which $2 \gamma < \Phi^2 (1-\gamma)$ and $B = \gamma / \Phi^2$.  Now,
    \begin{equation*}
        1 \leq C =  \max \left \{\frac{\gamma \log(1/\Phi)}{B \Phi (1-\Phi)},\, 1 \right \} \leq \max \left \{\frac{\gamma}{B \Phi^2},\, 1  \right \} = 1,
    \end{equation*}
    so $C = 1$.  The inequality is again due to~\eqref{eq:negLogRatio}.  Then,
    \begin{equation*}
        p = 1 - \frac{\log(2)}{\log(\gamma + \gamma/\Phi^2)} = p_2.
    \end{equation*}
    Thus,
    \begin{equation*}
        j \geq j^* = \left (\frac{\log(1/\underline{\pi}) \Phi^2}{\gamma} \right ) \left ( \frac{1}{\varepsilon} \right )^{p_2} = \left (\frac{\log(1/\underline{\pi})}{B} \right ) \left (\frac{1}{\varepsilon} \right )^{p},
    \end{equation*}
    and again $V^*(s) - V_j(s) \leq \varepsilon$ for all $s \in \mathcal{S}'$ by Theorem~\ref{theorem:discountedConvergenceRate}.
\end{proof}

\end{document}